\documentclass{article}
\usepackage{kr}

\usepackage{times}
\usepackage{soul}
\usepackage{url}
\usepackage[hidelinks]{hyperref}
\usepackage[utf8]{inputenc}
\usepackage[small]{caption}
\usepackage{graphicx}
\usepackage{amsmath}
\usepackage{amsthm}
\usepackage{booktabs}
\usepackage{algorithm}
\usepackage[noend]{algpseudocode}
\algrenewcommand\algorithmicindent{0.7em}

\newtheorem{example}{Example}
\newtheorem{theorem}{Theorem}

\usepackage{amssymb}
\usepackage{amsfonts}
\usepackage{modelops} % by Alexander W. Kocurek
\usepackage{tikz}
\usetikzlibrary{arrows.meta, positioning}
\usepackage{paralist}
\usepackage{xspace}
\usepackage{comment}

\newtheorem{definition}{Definition}

\newtheorem{corollary}{Corollary}

\newtheorem{proposition}{Proposition} 

\everymath{\it}\everydisplay{\it}

\newenvironment{mysplit}%
  {\begin{array}{l}}%
  {\end{array}}
  
\newenvironment{logicrule}%
  {\begin{equation}\begin{array}{l}}
  {\end{array}\end{equation}}

\makeatletter
\newcommand\customsize{\@setfontsize\customsize\@ixpt\@viiipt}
\makeatother

\DeclareMathOperator{\val}{{=}}  
 
\DeclareMathOperator{\nval}{{\neq}} 
\DeclareMathOperator{\plus}{{+}}  
\DeclareMathOperator{\minus}{{-}}  

\def\happensAt{\text{\customsize\normalfont\sffamily happensAt}}
\def\initiatedAt{\text{\customsize\normalfont\sffamily initiatedAt}}
\def\terminatedAt{\text{\customsize\normalfont\sffamily terminatedAt}}
\def\holdsAt{\text{\customsize\normalfont\sffamily holdsAt}}
\def\maxDuration{\text{\customsize\normalfont\sffamily fi}}
\def\extensible{\text{\customsize\normalfont\sffamily p}}
\def\true{\text{\customsize\normalfont\sffamily true}}

\def\terminatedIn{\textsf{\customsize termIn}}
\def\initiatedIn{\textsf{\customsize initIn}}

\def\cancelled{\textsf{\customsize cancelled}}

\def\myat{\mathop{\scalebox{0.9}{@}}}
\newcommand{\larsatone}[1]{\myat\nolimits_{#1}}
\newcommand{\larsat}[2]{\myat\nolimits_{#1}\!#2}
\newcommand{\smallboxplus}{%
  \mathbin{\scalebox{0.8}{$\boxplus$}}%
}
\newcommand{\larswindow}[2]{\smallboxplus^{#1}\mskip-2mu#2}
\def\larsbox{\mathord{\scalebox{0.7}{$\Box$}}\mskip2mu}
\def\larsdiamond{\mathord{\scalebox{0.8}{$\Diamond$}}\mskip 2mu}

\newcommand{\tddiamond}[2]{#1^{#2}_{\larsdiamond}}
\newcommand{\tdbox}[2]{#1^{#2}_{\larsbox}}

\def\nbf{\text{\customsize\normalfont\sffamily not}}

\def\ecde{EC\texorpdfstring{\textsuperscript{DE}}{->}\xspace}

\def\datalogones{Datalog$_{1S}$\xspace}
\def\td{Temporal Datalog\xspace}

\def\tdfpneg{Temporal Datalog\texorpdfstring{$^\rightarrow_\circleddash$}{->o}\xspace}

\def\nodelabelling{\mathrm{rule}}
\def\edgelabelling{l}
\newcommand{\labellededge}[1]{\rightarrow_{#1}}

\def\pedges{E^{+}}
\def\nedges{E^{-}}
\def\timelabelling{\mathrm{time}}

\def\pred{\Sigma}

\def\tgmat{\mathsf{TGmat}}
\def\tstgmat{\mathsf{STGmat}}

\def\incrementallystratify{\textsc{IncrementallyStratify}}
\def\incrementallystratify{\textsc{IncrementallyStratify}}
\def\forget{\textsc{Forget}}
\def\larstotd{\mathsf{pLARS2TD}}
\def\ectotd{\mathsf{EC2TD}}
\def\mapecatom{\mathsf{mapEC}}

\def\vecx{\boldsymbol{X}}
\newcommand{\myvec}[1]{\boldsymbol{#1}}

\def\myadd{\mathsf{add}}
\def\myremove{\mathsf{remove}}
\def\myaddall{\mathsf{add\_all}}
\def\perfect{I^*}

\def\replace{replace}

\newcommand{\defeq}{\mathrel{:=}}

\newcommand{\seg}{SEG\xspace}
\newcommand{\segs}{SEGs\xspace}
\newcommand{\stg}{STG\xspace}
\newcommand{\stgs}{STGs\xspace}

\title{Efficient Temporal Datalog Materialisation for Composite Event Recognition}

\author{%
    Periklis Mantenoglou
    \affiliations
    Örebro University, Sweden
    \emails
    periklis.mantenoglou@oru.se
}
\begin{document}

\maketitle

\begin{abstract}
  Several applications demand the timely detection of critical situations, such as threats to safety and transparency, over high-velocity streams of symbolic events.
  This demand has motivated the development of (i) \emph{event specification languages}, which define composite events via temporal patterns over simpler events, and (ii) \emph{stream reasoning frameworks}, evaluating patterns expressed in these languages.
  However, event specification languages are typically studied in isolation, complicating their comparison in terms of expressivity and obscuring the scope of their associated stream reasoners.
  To mitigate this issue, we map practical fragments of prominent event specification languages into \tdfpneg, a temporal Datalog with stratified negation and no future dependencies.
  To support efficient stream reasoning over \tdfpneg, we propose Streaming Trigger Graphs, an extension of a state-of-the-art technique for Datalog materialisation. 
  % and demonstrate that reasoning with TSTGs yields competitive performance to well-established stream reasoners.
  %
  Our approach yields a uniform composite event recognition mechanism that has the potential to generalise across a wide range of practical event specification languages.
  %and achieves high efficiency in practice, with performance competitive with well-established stream reasoners.
  %
  % Note: Maybe we don't even need to go into probabilities in this paper.
  %Such real-world is inherently uncertain, as receiver and transmitter equipment is prone to malfunctions and perturbations in its stability.
  %
  %As a result, there is a need to represent and reason over noisy dynamic  data in real time.
  %
\end{abstract}

\section{Introduction}\label{sec:intro}

Composite event recognition (CER) frameworks detect instances of situations of interest, such as safety and transparency threats, over streams of symbolic events~\cite{DBLP:journals/vldb/GiatrakosAADG20}.
Effective CER is critical in various domains, such as smart cities~\cite{DBLP:journals/simpra/KhazaelVM23}, health systems~\cite{DBLP:journals/artmed/FalcionelliSTMC19}, and fleet management~\cite{DBLP:journals/tplp/TsilionisKNDA19}.
A CER solution typically involves two components: an event specification language (ESL), defining a class of composite event patterns via syntactic constructs and operators, and a reasoner evaluating these patterns with minimal latency over unbounded event streams.
This separation enables a principled study of the trade-offs between ESL expressivity and reasoning tractability, avoiding ad-hoc solutions, and is thus prominent in CER~\cite{DBLP:conf/cikm/DasGZ18,DBLP:journals/pvldb/BucchiGQRV22,tsilionis2024,DBLP:journals/pvldb/AlevizosAP24,DBLP:journals/tplp/WangGWH25,rtecarrow}.

%This separation allows us to formally study and balance ESL expressivity with reasoning tractability, avoiding ad-hoc solutions, and is thus prominent in CER~\cite{DBLP:conf/cikm/DasGZ18,DBLP:journals/pvldb/BucchiGQRV22,DBLP:journals/pvldb/AlevizosAP24,DBLP:journals/tplp/WangGWH25,rtecarrow}.
%
%tractability by extending the ESL towards a more expressive class of patterns, able to express more situations of interest, and then assessing the effects of such an extension to the stream reasoning algorithms, thus avoiding ad-hoc solutions. 
%

Although there are various formal studies on specific ESLs~\cite{DBLP:conf/sigmod/ZhangDI14,DBLP:conf/icdt/GrezRUV20,DBLP:journals/fac/Zielinski23,DBLP:conf/aaai/MantenoglouA25}, a unifying approach is lacking, complicating ESL comparisons and obscuring the scope of the associated stream reasoners.
This also leads to practical issues as composite event patterns expressed in one ESL often cannot be translated into another ESL in a faithful and optimised manner, which may prohibit the use of the reasoner that is best suited to a specific CER problem while also complicating comparisons between reasoners.

Towards addressing this issue, we compile two prominent logic-based ESLs---based on the Event Calculus~\cite{kowalski86} and LARS~\cite{DBLP:journals/ai/BeckDE18}---into a variant of Temporal Datalog~\cite{DBLP:conf/pods/ChomickiI88} with stratified negation and no future dependencies, which we term \tdfpneg.
The Event Calculus has been used in ESLs for large-scale CER~\cite{DBLP:journals/tist/MontaliMCMA13,DBLP:journals/tkde/ArtikisSP15,DBLP:journals/artmed/FalcionelliSTMC19,DBLP:conf/frocos/Baumgartner21,DBLP:conf/kr/MantenoglouKA23,tsilionis2024,rtecarrow}, while LARS features practical fragments for CER~\cite{DBLP:journals/tplp/BeckEB17,DBLP:conf/semweb/BazoobandiBU17}, along with extensions for increased expressivity~\cite{DBLP:conf/ecai/EiterK20,DBLP:conf/kr/UrbaniKE22}, and distribution~\cite{DBLP:journals/tplp/EiterOS19}.

Temporal Datalog has been studied extensively in the context of stream reasoning~\cite{DBLP:conf/pods/ChomickiI88,DBLP:journals/csur/DantsinEGV01,DBLP:conf/aaai/RoncaKGMH18,DBLP:conf/kr/RoncaKGH18,DBLP:journals/ai/RoncaKGH22}, while being closely related to expressive ESLs with dedicated stream reasoners, such as DatalogMTL~\cite{DBLP:conf/aaai/WalegaKG19,DBLP:journals/ws/WalegaKWG23,DBLP:journals/tplp/WangGWH25}.
Thus, \tdfpneg serves as a natural formalism for embedding and studying ESLs.
To support efficient CER over such embedded specifications, we propose Streaming Trigger Graphs (\stgs), an extension of a state-of-the-art materialisation method for Datalog~\cite{DBLP:journals/pvldb/TsamouraCMU21} that supports \tdfpneg programs.
An \stg leverages a so-called \emph{incremental stratification} of the program to enable delta model updates over streaming data and optimisations that reduce redundant computations.

Our \textbf{contributions} may be summarised as follows.
\textbf{First}, we provide mappings from practical Event Calculus and LARS fragments to \tdfpneg, and prove that they are equivalence-preserving with respect to query answering.
\textbf{Second}, we propose \stg-based materialisation as a novel reasoning algorithm for \tdfpneg, demonstrate its correctness and discuss reasoning optimisations.
%
%Third, we propose optimisations for our algorithm and demonstrate that they lead to efficiency gains while preserving correctness.
%
Through the use of \tdfpneg and \stgs, we provide a uniform CER solution for two prominent ESLs, facilitating their formal analysis and comparison.
Our contributions apply to ESLs with stratified negation and no future dependencies, commonly required in stream reasoning~\cite{zaniolo2012}, helping to further consolidate and advance CER.

%Section \ref{sec:background} provides the necessary background concepts.
%
%Section \ref{sec:krr_cer} reviews LARS and the Event Calculus in the context of practical stream reasoning, while Section \ref{sec:mapping} outlines a mapping from practical fragments of these languages into \tdfpneg.
%
%Section \ref{sec:reasoning} proposes an efficient materialisation algorithm for \tdfpneg, which is then empirically proven to be highly-efficient and competitive with the state-of-the-art in Section \ref{sec:experiments}. 
%
%Section \ref{sec:related} compares our work to related approaches from the literature, and Section \ref{sec:summary} summarises our contributions and outlines future research directions.

\section{Background}\label{sec:background}

\subsection{Logic Programming} \label{sec:lp}

We investigate ESLs that are based on logic programming.
Logic programs include variables, constants, predicates, and functors.
Variables start with an upper-case letter, whereas predicates, functors and constants start with a lower-case letter.
A term is either a variable, a constant, or an expression $f(\vecx)$, where $f$ is a functor and $\vecx$ is a term tuple.
An expression $p(\vecx)$, where $p$ is a predicate and $\vecx$ is a term tuple, is called an atom.
A fact is a variable-free atom.
A literal is an atom optionally preceded by ``$\nbf$'', expressing negation-by-failure~\cite{clark78}.
Arity means number of arguments.

A logic program is a set of rules ${h\leftarrow b_1, \dots, b_n}$, where $\leftarrow$ denotes implication, atom $h$ is the head of the rule and its body $b_1, \dots, b_n$ is a conjunction of conditions.
All variables in a rule are implicitly universally quantified.
In definite (resp.~normal) logic programs, all body conditions are atoms (literals).
%
%We silently assume that rules are safe, i.e., every variable in the head of a rule also appears in its body.
%
An expression is called ground if it contains no variables.
A substitution is a mapping of the terms in an expression to constants.
A predicate is called intensional if it appears in the head of a rule with non-empty body, and extensional otherwise. 
A database $D$ is a set of facts with extensional predicates.
%
%$(P, D)$ denotes the program resulting from adding the facts in $D$ in program $P$.
%
A Datalog program is a definite logic program without functors.
Every Datalog program can be rewritten into an equivalent program partitioned into extensional rules (with only extensional body predicates) and intensional rules (with only intensional body predicates). %, by introducing auxiliary predicates where needed.
%
%A rule is called extensional if all predicates in its body are extensional, and intensional otherwise.
%

An interpretation of a logic program $P$ is a set of facts from $P$.
To define the semantics of $P$, we identify its models, i.e., interpretations satisfying all rules in $P$.
Definite logic programs have a unique minimal model, while normal logic programs (which include negation) may have multiple minimal models, complicating their semantic characterisation.
%While programs without negation are typically interpreted based on their unique minimal model, the addition of negation complicates this, as it may lead to multiple minimal models.
%A logic program that includes negation may have multiple minimal models, thus complicating its semantic interpretation.
%A semantics maps each logic program $P$ to its intended meaning.
%
%An interpretation of $P$ is a set of ground atoms from $P$.
% 
%A model of $P$ is an interpretation of $P$ where all rules in $P$ are true statements.
%
%A ground atom $a$ is a logical consequence of $P$ if, for every interpretation $I$, $I$ is a model of $P$ implies that $a\in I$.
%
%The role of a semantics is to identify the model of the program that expresses its intended interpretation, i.e., its intended meaning.
%
%The semantics of a definitive logic program is its unique least model, which coincides with the intersection of all its models~\cite{DBLP:books/sp/Lloyd87}.
%
%On the other hand, normal logic programs may have multiple minimal models, as a result of allowing negation, and thus their intended meaning is not always clear.
%
%Answer Set Programming (ASP), e.g., interprets such programs over the stable model semantics~\cite{DBLP:conf/iclp/GelfondL88}.
%Several semantics have been proposed to address this issue~\cite{DBLP:journals/jlp/AptB94}.
%
To interpret such programs, the closed world assumption (CWA) allows us to infer ``$\nbf\ a$'' if $a$ cannot be finitely proven~\cite{DBLP:conf/adbt/Reiter77}.
Thus, for a program with rule ``$a\leftarrow b, \nbf\ c$'' and fact $b$, model $\{b, a\}$ is preferred over model $\{b, c\}$, because, since it is impossible to prove $c$, the CWA infers ``$\nbf\ c$'', leading to the derivation of $a$ via the rule.
When the program does not include cyclic dependencies via $\nbf$, i.e., it is \emph{stratified}, we can derive an intended interpretation for it by applying the CWA to negated atoms following rule dependencies bottom-up~\cite{DBLP:books/mk/minker88/Przymusinski88}.
%
%
%Two prominent stratification types follow.
%
\begin{definition}[Global Stratification and Local Stratification]\label{def:strat}
Consider a normal logic program $P$ with the predicates in set $\pred$ and the ground atoms in set $B$.
A rule $r$ in $P$ is:
\begin{align}
& \label{eq:lp_rule}
\begin{mysplit}
h(\myvec{X_h})\leftarrow b_1(\myvec{X_{b_1}}), \dots, b_m(\myvec{X_{b_m}}), \\
\qquad\qquad\;\nbf\ c_1(\myvec{X_{c_1}}), \dots, \nbf\ c_k(\myvec{X_{c_k}}).
\end{mysplit}
\end{align}
$P$ is globally stratified if $\pred$ can be decomposed into disjoint sets (strata) $\pred_1$, \dots, $\pred_{n}$, so that, for every $r$ in $P$, we have that, if $h\in \pred_i$, where $i\in [n]$, then: (i) $b_t\in \pred_j$, where $t\in [m]$ and $j\leq i$, and (ii) $c_t\in \pred_j$, where $t\in [k]$ and $j<i$.
Such a decomposition is a global stratification of $P$.

$P$ is locally stratified if $B$ can be split into (possibly infinite) strata $B_1, B_2, \dots$, so that, for every ground instance of a rule $r$ in $P$, if $h(\myvec{x_h})\in B_i$ then: (i) $b_t(\myvec{x_{b_t}})\in B_j$, where $t{\in} [m]$ and $j\leq i$, and (ii) $c_t(\myvec{x_{c_t}})\in B_j$, where $t{\in}[k]$ and $j<i$.
Such a decomposition is a local stratification of $P$.
\end{definition}
%
%According to Definition \ref{def:local_strat}, a program $P$ is locally stratified iff there are no cyclic dependencies with negation in any ground program generated by substituting the variables in $P$ with constants.
%
%For any two ground atoms $a$ and $b$ in a locally stratified program $P$, if $b\priority a$, then $a$ is in a lower stratum than $b$ in every local stratification of $P$.
%
%In order to construct a model of $P$ that reflects the intended meaning of negation, we select a local stratification $\{P_0, \dots, P_n\}$ of $P$ (the model is independent of this choice) 
%
\noindent Based on Definition \ref{def:strat}, every globally stratified program is also locally stratified.
Local stratification, however, is undecidable in the general case, while global stratification can be decided in polynomial time~\cite{DBLP:journals/tcs/Palopoli92}.
A stratified program admits a unique \emph{perfect model}~\cite{DBLP:books/mk/minker88/Przymusinski88}, derived by (i) computing a stratification, and (ii) evaluating its rules in a bottom-up order based on the stratum of their head while applying the CWA to negative conditions.
%We may derive a model for a stratified program $P$ by (i) computing a stratification for $P$, and (ii) evaluating its rules following a bottom-up order based on the stratum of their head while applying the CWA to negative conditions. This yields the unique \emph{perfect model} of $P$~\cite{DBLP:books/mk/minker88/Przymusinski88}.
%
%We implicitly assign this model as the intended meaning of all stratified programs that follow.

\subsection{Temporal Datalog} \label{sec:td}

%We start with preliminaries on temporal-logical languages.
%
A temporal program is a logic program where constants are split into objects and time-points, variables are split into object variables and time variables, and time-points are positive integers.
%
%The time model is linear with positive integer time-points.
%
An object term is an object or an object variable.
A temporal fact is a fact including exactly one time-point.
The model of a temporal program is evaluated over a stream, i.e., an unbounded database of temporal facts.

Datalog programs have been studied extensively~\cite{DBLP:books/cs/Ullman89,DBLP:conf/sigmod/ShkapskyYICCZ16}, and several temporal extensions have been proposed~\cite{zaniolo2012,DBLP:journals/ws/WalegaKWG23,DBLP:journals/tplp/BellomariniBNS25}.
Such an extension is \datalogones where each predicate is augmented with an additional argument featuring a \emph{time term}~\cite{DBLP:conf/pods/ChomickiI88}.
A time term is $T\minus k$, where $T$ is a time variable and $k$ is an integer called the time offset.
Integer addition is silently applied when $T$ is grounded. %in $T\plus k$ when $T$ is ground. % or when $T\plus k$ needs to be compared with another time term.
We focus on \td, i.e., a fragment of \datalogones where each rule may include at most one time variable~\cite{DBLP:journals/ai/RoncaKGH22}.

To formalise \tdfpneg, i.e., the \td variant we use for embedding ESLs, we make two modifications on \td: (i) we introduce a restricted form of negation, and (ii) we disallow derivations towards the past.
A \tdfpneg rule has the following form:
\begin{logicrule}\label{eq:tdfpneg}
h(\myvec{X_h}, T) \leftarrow\\ 
\quad b_1(\myvec{X_{b_1}}, T\minus k_{b_1}), \dots, b_m(\myvec{X_{b_m}}, T\minus k_{b_m}), \\[1pt]
\quad \nbf\ c_{1}(\myvec{X_{c_{1}}}, T\minus k_{c_{1}} ), \dots, \nbf\ c_k(\myvec{X_{c_k}}, T\minus k_{c_k}).
\end{logicrule}
%
%The last argument of every atom in rule \eqref{eq:tdfpneg} is its time argument.

We restrict negation in \tdfpneg by requiring programs to be temporally stratified~\cite{DBLP:journals/tcs/NomikosRG05}.
\begin{definition}[Temporal Stratification]\label{def:tempstrat}
Consider a program $P$ with rules \eqref{eq:tdfpneg}, and the program $P'$ produced by removing all object terms from $P$ and reducing predicate arity to $1$.
$P$ is called temporally stratified if $P'$ is locally stratified.

For a local stratification $B'_1, B'_2, \dots$ of $P'$, the sequence $B_1, B_2, \dots$, where $(p, t)\in B_i$ iff $p(t)\in B'_i$, is a temporal stratification of $P$. 
%A temporal stratification of $P$ is an infinite sequence of strata $\myvec{B}$ such that, for a predicate $p$ and a time-point $t$, $(p, t)$ is in the $i$-th stratum of $\myvec{B}$ if $p(t)$ is in the $i$-th stratum of a local stratification of $P'$.
%
\end{definition}
\noindent According to Definition \ref{def:tempstrat}, if, in a temporally stratified program $P$, $p(\vecx, T)$ depends on $p(\myvec{X'}, T')$ through negation, then ${T'\nval T}$.
Thus, such dependencies are not cyclic in the grounded program, and can be resolved by unfolding (i.e., grounding) time.
As a result, all temporally stratified programs are locally stratified, and thus have a unique perfect model.
%
%Furthermore, since globally stratified programs do not feature cycles with negation regardless of variable grounding, it follows that all globally stratified programs are temporally stratified.
%
Temporal stratification can be decided in polynomial time, while local stratification is co-NP hard on Temporal Datalog with negation~\cite{DBLP:journals/tcs/NomikosRG05}.

To formalise our second modification on \td, we define a forward-propagating program.
\begin{definition}[Forward-Propagating Program]\label{def:fptp}
A program $P$ with rules of the form \eqref{eq:tdfpneg} is forward-propagating if, for every time term $T\minus k$ in the body of a rule in $P$, $k$ is non-negative.
\end{definition}

\begin{definition}[\tdfpneg Program]\label{def:tdfpneg}
A \tdfpneg\ program $P$ is a set of forward-propagating rules of the form \eqref{eq:tdfpneg}, such that $P$ is temporally stratified.
\end{definition}
\begin{example}[\tdfpneg Program]\label{ex:safety}
The following program $P$ evaluates the safety status of a device $X$ over a stream of $repair$ and $warning$ temporal facts.
\begin{align}
& \label{eq:repair}
\begin{mysplit}
safe(X, T)\leftarrow repair(X, T\minus 1). 
\end{mysplit}\\[-2pt]
& \label{eq:notfaulty} 
\begin{mysplit}
safe(X, T)\leftarrow \\
\quad safe(X, T\minus 1), \nbf\ warning(X, T\minus 1). 
\end{mysplit}\\[-2pt]
& \label{eq:trusted}
\begin{mysplit}
    trusted(X, T)\leftarrow \\
   \quad safe(X, T), safe(X, T\minus 1), safe(X, T\minus 2).
\end{mysplit}
\end{align}
Rules \eqref{eq:repair}--\eqref{eq:notfaulty} state that a device $X$ is considered $safe$ at the current time-point $T$ if there was a $repair$ fact with time-stamp $T\minus 1$ or it was proven to be $safe$ at $T\minus 1$ and no $warning$ was produced at $T\minus 1$.
Rule \eqref{eq:trusted} expresses that a device $X$ is considered $trusted$ if it was proven to be $safe$ for three consecutive time-points (including $T$).
%
%Given a stream of $repair$ and $warning$ temporal facts, rules \eqref{eq:repair}--\eqref{eq:trusted} can compute the time-points over which a device is safe and/or trusted, i.e., temporal facts for predicates $safe$ and $trusted$.
%
$P$ is forward-propagating and $B_t\val \{(p, t)|p{\in} Pred(P)\}$ defines a temporal stratification. Thus, $P$ is a \tdfpneg program.
\end{example}
\subsection{Trigger Graphs}\label{sec:tgs}

Trigger Graphs (TGs) were proposed as a technique to steer Datalog materialisation away from redundant computations~\cite{DBLP:journals/pvldb/TsamouraCMU21}, complementing classical semi-naive evaluation and the chase-based approaches~\cite{DBLP:conf/semweb/NenovPMHWB15,DBLP:conf/aaai/UrbaniJK16,DBLP:journals/pvldb/BellomariniSG18}.
Towards defining TGs, we start with Execution Graphs (EGs).
\begin{definition}[Execution Graph]\label{def:eg}
An Execution Graph (EG) for a Datalog program $P$ is an acyclic, node- and edge-labeled digraph $G\val (V, E, \nodelabelling, \edgelabelling)$, where $V$ and $E$ are the set of nodes and the set of edges of the graph, while $\nodelabelling$ and $\edgelabelling$ are the node- and edge-labeling functions. 
$\nodelabelling$ maps each node in $V$ to a rule in $P$.
There can be a labeled edge of the form $u\labellededge{j} v$, i.e., $(u, v)\in E$ and $\edgelabelling((u,v))\val j$, only if the $j$-th predicate in the body of $\nodelabelling(v)$ equals the head predicate of $\nodelabelling(u)$.
%
%In other words, $\edgelabelling$ maps each edge $(u, v)\in E$ to a position in the body of $\nodelabelling(v)$.
%
For each non-leaf node $v$, the EG includes exactly one edge $u \labellededge{j} v$ for each body atom of $\nodelabelling(v)$.
\end{definition}
Each node $v$ of an EG denotes a plan for executing $\nodelabelling(v)$ by specifying the node $u$ that will be used for the evaluation of its $i$-th body condition through an edge $u\labellededge{i} v$.
%
%As a result, EGs encode a plan for materialising a Datalog program. 
%
To enable plan execution, each node $v$ is populated with facts produced by applying $\nodelabelling(v)$ on the facts in the nodes specified by its incoming edges. 
%
%The process of populating the nodes of an EG with facts is formalised as follows.

\begin{definition}[Facts in EG]\label{def:egexec}
Let $P$ be a Datalog program, $D$ a database, $G$ an EG for $P$, $v$ a node in $G$ and $\sigma$ a substitution for the variables in $\nodelabelling(v)$.
The rules in $P$ are partitioned into extensional and intensional, while $h$ and $b$ denote the head and the body of $\nodelabelling(v)$.
$v(D)$, i.e., the set of facts in node $v$ given $D$, contains a fact $h\sigma$ if:
\begin{compactitem}
    \item $\nodelabelling(v)$ is an extensional rule and for each atom $b_i$ in $b$ we have $b_i\sigma\in D$, or
    \item $\nodelabelling(v)$ is an intensional rule and for each atom $b_i$ in $b$ we have $b_i\sigma\in u_i(D)$, where $u\labellededge{i} v$ is an edge in $G$.
\end{compactitem}
$G(D)\val D \cup \bigcup_{v\in V}v(D)$ contains all facts derived by materialising EG $G$ on database $D$.
\end{definition}

If $G(D)$ contains all facts that follow from $P\cup D$, and no other facts, then we say that EG $G$ is a TG.

\begin{definition}[Trigger Graph]\label{def:tg}
Considering a Datalog program $P$, a database $D$, and an EG $G$ for $P$, we say that $G$ is a Trigger Graph (TG) for $P\cup D$ if for each fact $a$ we have $P\cup D\models a$ iff $a\in G(D)$. $G$ is a TG for $P$ if, for each database $D$, $G$ is a TG for $P\cup D$.
\end{definition}

$k$-compatibility determines which nodes of an EG contain facts that may be combined to derive new facts.

\begin{definition}[$k$-compatibility]\label{def:kcompatible}
Let $P$ be a program, $r$ an intensional rule in $P$ and $G$ an EG for $P$.
A tuple $(u_1, \dots, u_n)$ of nodes from $G$ is $k$-compatible with $r$ if:
\begin{compactitem}
\item the predicate in the head of $u_i$ is equal to the predicate in the $i$-th body atom of $r$;
\item the depth of each $u_i$, i.e., the length of the longest path in $G$ that ends at $u_i$, is less than $k$; and
\item at least one node in $(u_1, \dots, u_n)$ has depth $k\minus 1$.
\end{compactitem}
\end{definition}
\begin{algorithm}[t]
\caption{$\tgmat$}
\label{alg:tgmat}
\small
\textbf{Input}: A Datalog program $P$ and a database $D$.\\
\textbf{Output}: The facts in a materialised TG for $P\cup D$.
\begin{algorithmic}[1] %[1] enables line number
\State $k \defeq 0$, $G \defeq \emptyset$, $I^0 \defeq \emptyset$ \label{tgmat:init}
\Repeat
    \State $k \defeq k + 1$, $I^k \defeq I^{k-1}$ \label{tgmat:ruleinit}
    \ForAll{rules $r$ in $P$} \label{tgmat:forrule}
    \If{$r$ is extensional \textbf{and} $\exists \sigma: \forall b_i\in r: b_i\sigma\in D$} \label{tgmat:ifextensional}
    \State $G.\myadd((v, \emptyset, \nodelabelling(v)\val r, \emptyset))$ \label{tgmat:addextensional}
    \Else
    \ForAll{$(u_1, \dots, u_n)$ $k$-compatible with $r$} \label{tgmat:forkcompatible}
    \State $G.\myadd((v, \bigcup_{i\in [n]}(u_i,v), \nodelabelling(v)\val r,$ \label{tgmat:addcompatible}
    \Statex \qquad\qquad\qquad\quad\hspace{4pt}$\bigcup_{i\in [n]}\edgelabelling((u_i, v))\val i))$
    \EndFor
    \EndIf
    \EndFor
    \State $G \defeq \textsc{minDatalog}(G)$ \label{tgmat:minimize}
    \ForAll{nodes $v$ in $G$ of depth $k$}~$I^k.\myadd(v(D, I^{k-1}))$ \label{tgmat:addmat}
    \EndFor
\Until{$I^k \val I^{k-1}$} \label{tgmat:endcond}
\State \Return $I^k$ \label{tgmat:return}
\end{algorithmic}
\end{algorithm}
Algorithm \ref{alg:tgmat} outlines the process of building a TG $G$ for program $P$ and database $D$.
In the first iteration ($k\val 1$), we add one node in $G$ for each extensional rule that fires based on $D$ (lines \ref{tgmat:ifextensional}--\ref{tgmat:addextensional}).
In all subsequent iterations ($k>1$), we use $k$-compatibility to determine whether a node $v$ for rule $r$ and incoming edges from nodes $u_1$, \dots, $u_n$ will be constructed (lines \ref{tgmat:forkcompatible}--\ref{tgmat:addcompatible}). 
The third requirement in Definition \ref{def:kcompatible} ensures that node combinations used in previous iterations are not $k$-compatible, thereby preventing redundant node constructions, in line with semi-naïve evaluation \cite{DBLP:books/cs/Ullman89}.
%,
After each iteration, we remove the nodes of depth $k$ that are subsumed by another node with respect to the query they express (line \ref{tgmat:minimize}), and then populate set $I^k$ with the facts in the remaining nodes of depth $k$ (line \ref{tgmat:addmat}).
$v(D, I^{k\minus 1})$ denotes an optimised materialisation strategy for these facts, restricting attention to tuples that have not been derived at a previous step, i.e., they are not in $I^{k\minus 1}$.
We terminate if no new facts were added in the last iteration (line \ref{tgmat:return}).
%
%See~\cite{DBLP:journals/pvldb/TsamouraCMU21} for more on the optimisations in lines \ref{tgmat:minimize} and \ref{tgmat:addmat}.

\section{Event Specification Languages}\label{sec:krr_cer}

We focus on two prominent logical ESLs based on LARS and the Event Calculus.
A thorough review of ESLs with informative comparisons is provided in Section \ref{sec:related}.

\subsection{LARS}\label{sec:lars}

%LARS is an extension of ASP with temporal operators for stream reasoning~\cite{DBLP:journals/ai/BeckDE18}.
%%
%LARS has also been employed in various settings and tasks, including distributed stream reasoning~\cite{DBLP:journals/tplp/EiterOS19}, preferential, probabilistic and quantitative reasoning~\cite{DBLP:conf/ecai/EiterK20}, and reasoning over rules with existential quantifiers~\cite{DBLP:conf/kr/UrbaniKE22}.
%

LARS extends ASP with window operators and temporal modalities for stream reasoning~\cite{DBLP:journals/ai/BeckDE18}.
A window operator $\larswindow{w} a$ over formula $a$ uses window function $w$ to restrict its evaluation on a particular subset of a stream $S$.
For example, time windows $\larswindow{n}$ restrict attention to the events that took place over the last $n$ time-points. %, while tuple windows $\larswindow{\#n}$ maintain only the last $n$ events.
Temporal modalities express the temporal validity of a formula $a$. 
$\larsdiamond{a}$ (resp.~$\larsbox{a}$) requires that $a$ holds at some (every) time-point in $S$.
$\larsat{T'}{a}$ requires that $a$ holds when the evaluation time is shifted to $T'$, granted that $T'\in S$.
In practice, it is used to express that $a$ holds at some time-point in $S$ (i.e., same as $\larsdiamond$), but also to bind $T'$ to such a time-point.

The main LARS-based stream reasoners are Ticker~\cite{DBLP:journals/tplp/BeckEB17} and Laser~\cite{DBLP:conf/semweb/BazoobandiBU17}. Both are based on the “plain LARS” fragment, whose core building blocks are the so-called \emph{extended atoms}.
\begin{definition}[Extended atom] \label{def:extended_atom}
Let $a$ be an atom $p(\vecx)$, where $p$ is a predicate and $\vecx$ is a sequence of object terms, $T'$ a time variable and $d$ a positive integer.
An extended atom $e$ is defined by the following grammar:
\begin{align*}
e ::= a\ |\ \larsat{T'}{a}\ |\ \larswindow{d}{\larsat{T'}{a}}\ |\ \larswindow{d}{\larsdiamond a}\ |\ \larswindow{d}{\larsbox a}
\end{align*}
%
%We call $p(\vecx)$ the base atom of $e$.
%
\end{definition}
We focus on time windows and rules without $\larsatone{T'}$ operators in the head, leading to forward-propagating programs.
%Definition \ref{def:plain_lars} restricts the use of temporal modalities only over time and tuple windows, which both refer to earlier portions of the stream. 
%
%Thus, plain LARS programs are forward propagating.
%
%We focus our attention on time windows and rules without . 
%
\begin{definition}[Plain LARS]\label{def:plain_lars}
A plain LARS program is a set of rules of the form ``$h \leftarrow b_1, \dots, b_m.$'', where $h$ is an atom, $b_i$, $i\in [m]$, is a (possibly negated) extended atom, and all window operators are time windows. 
\end{definition}

In plain LARS, if a rule $r$ includes condition $\larsat{T'}{a}$, then $T'$ must be grounded earlier in the body of $r$, in order to avoid evaluating $\larsat{T'}{a}$ over arbitrary future time-points.
Therefore, every such condition is preceded by a positive $\larswindow{d}{\larsat{T'}{a}}$ condition, which constrains $T'$ to the window $\{T\minus d, \dots, T\}$, where $T$ is the rule evaluation time.  %with $T$ being the rule evaluation time, restricting the grounding of $T'$.
%
%$\larsat{T'}{p(\vecx)}$ expressions may then be used in the head of a rule to maintain in the output the time-point at which the first $\larswindow{d}{\larsat{T'}{a}}$ body condition was satisfied.

While both Ticker and Laser employ plain LARS, they differ on their support for negation.
Ticker allows non-stratified negation, while Laser is restricted to globally stratified programs.
%
%Regarding reasoning efficiency, Laser outperforms Ticker on stream reasoning over globally stratified programs.
%
We focus on globally stratified plain LARS programs, motivated by the superior reasoning efficiency of Laser~\cite{DBLP:journals/ai/BeckDE18}, as well as by the common assumption in CER that there is a single, implicit model of the world expressing the actual composite event occurrences~\cite{cugola_margara}.
%
%For this reason, we focus on globally stratified plain LARS program, comprising a set of rules of the form \eqref{eq:lars}.

%A globally stratified plain LARS programs yields a unique answer stream~\cite{DBLP:conf/semweb/BazoobandiBU17}.

\begin{example}[Plain LARS Program]\label{ex:lars}
The following plain LARS program monitors safety in a network of devices. 
\begin{align}
& \label{eq:lars_verified}
\begin{mysplit}
verified(X)\leftarrow \larswindow{2}{\larsdiamond repair(X)}.
\end{mysplit}\\[-2pt]
& \label{eq:lars_trusted} 
\begin{mysplit}
trusted(X)\leftarrow \larswindow{6}{\larsbox verified(X)}.
\end{mysplit}\\[-2pt]
& \label{eq:lars_unverified}
\begin{mysplit}
unverified(X)\leftarrow \larswindow{2}{\larsbox \neg repair(X)}.
\end{mysplit}\\[-2pt]
& \label{eq:lars_integrity_threat}
\begin{mysplit}
integrity\_threat\_of(X, Y)\leftarrow \\
   \quad \larswindow{2}{\larsbox trusted(X)}, \larswindow{2}{\larsat{T'}{unverified(Y)}}, \\
   \quad \larswindow{2}{\larsat{T'}{connected(X,Y)}}. 
\end{mysplit}
\end{align}
Rules \eqref{eq:lars_verified}--\eqref{eq:lars_trusted} state that device $X$ is $verified$ if a $repair$ occurred in a two time-step window, and $trusted$ if it has been $verified$ over a six time-step window.
Rule \eqref{eq:lars_unverified} marks $X$ as $unverified$ if no $repair$ occurred in a two time-step window, while rule \eqref{eq:lars_integrity_threat} reports that a device $Y$ poses an integrity threat to a trusted device $X$ if $Y$ was unverified and connected to $X$ at some time-point $T'$ within a two time-step window.
A global stratification places $repair$ in the first stratum and the remaining predicates in the second stratum.
\end{example}

\subsection{Event Calculus with Delayed Effects}\label{sec:ecde}

The Event Calculus (EC) is a temporal logic programming formalism where the functors are classified as event types and \emph{fluent} types, expressing time-varying properties, and a constant sort is reserved for the possible values of each fluent type~\cite{kowalski86}.
An event $e(\vecx)$ (resp.~fluent $f(\vecx)$) is a term where $e$ ($f$) is an event (fluent) type and $\vecx$ contains only objects and variables.
    %
    %A value $v$ for fluent $F$ is an element of set $\rtecvalues^F$.
    %
Expression $f(\vecx)\val v$ states that fluent $f(\vecx)$ has value $v$ and is often used in CER to denote composite events~\cite{DBLP:journals/tkde/ArtikisSP15,DBLP:journals/artmed/FalcionelliSTMC19}.
We focus on an EC variant where events may have \textbf{d}elayed \textbf{e}ffects on fluent values---which we term \ecde---as it is employed in a state-of-the-art CER framework~\cite{rtecarrow}.
%
%Event Calculus variants are employed by numerous composite event recognition and stream reasoning frameworks~\cite{DBLP:conf/frocos/Baumgartner21}.
%
%Among the most prominent ones are: the Run-Time Event Calculus (RTEC), i.e., a logic programming implementation that is optimised for large scale event streams via windowing, caching and indexing~\cite{DBLP:journals/tkde/ArtikisSP15}, and having numerous extensions of RTEC focusing on its expressive power~\cite{DBLP:conf/kr/MantenoglouPA22,DBLP:conf/kr/MantenoglouKA23,rtecarrow} and its reasoning algorithms~\cite{DBLP:journals/jair/TsilionisAP22,tsilionis2024}, and jREC, i.e., an implementation of the ‘Reactive Event Calculus’ that has been extended with indexing for manipulating lists of event and FVP occurrences efficiently~\cite{DBLP:journals/tist/MontaliMCMA13,DBLP:journals/aamas/ChesaniMMT13,DBLP:journals/artmed/FalcionelliSTMC19}.
%
%
%The set of 
%It includes a time sort, as well as sorts for domain objects, events, `fluents', i.e., domain properties whose values may change over time based on event occurrences, and fluent values.
%
The predicates in \ecde and their meaning are as follows.
\begin{compactitem}
\item $\happensAt(e(\vecx), T)$: event $e(\vecx)$ occurs at time $T$.
\item $\holdsAt(f(\vecx)\val v, T)$: fluent $f(\vecx)$ has value $v$ at $T$.
\item $\initiatedAt(f(\vecx)\val v, T)$/$\terminatedAt(f(\vecx)\val v, T)$: a time period where $f(\vecx)\val v$ is initiated/terminated at $T$.
\item $\maxDuration(f(\vecx)\val v, f(\vecx)\val v', d)$: an initiation of $f(\vecx)\val v$ leads to a \textbf{f}uture \textbf{i}nitiation of $f(\vecx)\val v'$ after $d$ time-points, unless $f(\vecx)\val v$ is terminated in the meantime.
\item $\extensible(f(\vecx)\val v)$: the above future initiation of $f(\vecx)\val v'$ is \textbf{p}ostponed by intermediate initiations of $f(\vecx)\val v$.
\end{compactitem}
%
%In other words, a set of events having the immediate effect of initiating $f(\vecx)\val v$ (see rule~\eqref{eq:ec-initiatedAt-schema}) may also have the delayed effect of initiating $f(\vecx)\val v$ after $d$ time-points.
%
%
%In the above expressions, $T$ is a term term or a time-point.

%

In \ecde, fluents are classified as \emph{simple} or \emph{statically determined}.
The values of a simple fluent change based on the satisfaction of initiation and termination conditions and persist via the law of inertia.
%
%The time-points where the FVP holds may then be derived via the law of inertia, expressing that FVPs persist through time unless changed by event occurrences.
%
A statically determined fluent is defined as a Boolean combination of other fluents. % and does not follow the law of inertia.

\noindent\textbf{Simple fluent definitions.}~The rules defining $f(\myvec{X_f})\val v$ based on immediate effects of events have this form:
\begin{logicrule}\label{eq:ec-initiatedAt-schema}
\initiatedAt\mathit{(f(\myvec{X_f})\val v,\ T)} \leftarrow \\
\quad \happensAt(e_1(\myvec{X_{e_1}}),\ T)[,\\ 
\quad [\nbf]\ \happensAt(e_2(\myvec{X_{e_2}}),\ T),\ \dots \\
\quad [\nbf]\ \happensAt(e_n(\myvec{X_{e_n}}),\ T), \\
\quad  [\nbf]\ \holdsAt(f_1(\myvec{X_{f_1}})\val v_1,\ T),\  \dots, \\
\quad [\nbf]\ \holdsAt(f_m(\myvec{X_{f_m}})\val v_m,\ T)]. \\ 
%\quad atemporal\_constraints].
\end{logicrule}
\noindent We use ``$[\ ]$'' to denote optional parts of the rule, i.e., only the first positive $\happensAt$ atom is necessary.
Rules with head $\terminatedAt(f(\myvec{X_f})\val v, T)$ have the same form.  %, expressing the conditions under which $f(\vec{X_f})\val v$ is terminated have the same form.

$\maxDuration$ and $\extensible$ facts express future initiations as follows:
\begin{align}
& \label{eq:delayed-effects-inwindow}
\begin{mysplit}
\initiatedAt\mathit{(F\val V', T{+}d)} \leftarrow \\
	\quad\maxDuration(F\val V, F\val V', d), \initiatedAt(F\val V, T), \\
	\quad\nbf\ \cancelled(F\val V, T, T{+}d). 
\end{mysplit}\\[-2pt]
& \label{eq:delayed-effects-conditions1}
\begin{mysplit}
\mathit{\cancelled(F\val V, T, T\plus d)} \leftarrow \\
	\quad\mathit{\terminatedIn(F\val V, T, T\plus d).}
\end{mysplit}\\[-2pt]
& \label{eq:delayed-effects-conditions2}
\begin{mysplit}
\mathit{\cancelled(F\val V, T, T\plus d)} \leftarrow \\
	\quad\extensible(F\val V), \initiatedIn(F\val V, T, T\plus d).
\end{mysplit} %\tag{\ref{eq:delayed-effects2}$'$}
\end{align}
\noindent $\initiatedIn(F\val V{,} T{,} T\plus d)$ and $\terminatedIn(F\val V{,} T{,} T\plus d)$ express that $F\val V$ is initiated or terminated, respectively, at some time-point strictly between $T$ and $T\plus d$.
In rules \eqref{eq:delayed-effects-inwindow}--\eqref{eq:delayed-effects-conditions2}, $F$ is a variable spanning over fluents, while $V$ and $V'$ span over its possible values.
This abstraction emphasises that these rules are included regardless of the domain being modelled.
However, they may only fire if $\maxDuration$---and possibly $\extensible$---facts are included, expressing the dynamics of delayed fluent value changes in the domain.

\ecde also uses the following rules regardless of domain.
\begin{align}
&
\begin{mysplit}\label{eq:holdsAt1_discrete_inertia}
     \holdsAt(F\val V, T)\leftarrow \initiatedAt(F\val V, T\minus 1).
\end{mysplit}\\[-2pt]
&
\begin{mysplit}\label{eq:holdsAt2_discrete_inertia}
    \holdsAt(F\val V, T) \leftarrow\\
    \quad  \holdsAt(F\val V, T\minus 1),\\
    \quad  \nbf\ \terminatedAt(F\val V, T\minus 1).
\end{mysplit} \\[-2pt]
&
\begin{mysplit}\label{eq:init_diff}
    \terminatedAt(F\val V, T) \leftarrow \\
    \quad  \initiatedAt(F\val V', T), V'\nval V.
\end{mysplit} 
\end{align}
Rules \eqref{eq:holdsAt1_discrete_inertia}--\eqref{eq:holdsAt2_discrete_inertia} express the law of inertia, i.e., $F\val V$ holds at $T$ if it was initiated at the previous time-point $T\minus 1$ or held at $T\minus 1$ and it was not terminated at $T\minus 1$.
%
%This representation of inertia exhibits the Markov property, allowing for optimisations during reasoning, as the initiations, terminations and truth values of $F\val V$ at every time-point before $T\minus 1$ are irrelevant to the evaluation of the truth value of $F\val V$ at $T$.
%
Rule \eqref{eq:init_diff} terminates $F\val V$ when $F$ is initiated with a value other than $V$, restricting fluents to at most one value at a time.

\noindent\textbf{Statically determined fluent definitions.} Such a definition for $f(\myvec{X_f})\val v$ contains $m$ rules where the $i$-th rule is:
\begin{logicrule}\label{eq:ec-holdsAt-schema}
\holdsAt\mathit{(f(\myvec{X_f})\val v,\ T)} \leftarrow \\
\quad  [\nbf]\ \holdsAt(f_{i1}(\myvec{X_{f_{i1}}})\val v_{i1},\ T)[, \\
\quad  [\nbf]\ \holdsAt(f_{i2}(\myvec{X_{f_{i2}}})\val v_{i2},\ T),\ \dots, \\
\quad  [\nbf]\ \holdsAt(f_{in}(\myvec{X_{f_{in}}})\val v_{in},\ T)].
%\quad atemporal\_constraints].
\end{logicrule}
Together, these $m$ rules define $f(\myvec{X_f})\val v$ as a DNF formula on the values held by the fluents in their bodies.
This definition has proven to be equivalent to the interval-based analog used for practical CER~\cite{DBLP:conf/aaai/MantenoglouA25}.
%
%Cyclic dependencies among FVPs via statically determined definitions are forbidden; cycles are allowed if they include only simple FVPs~\cite{DBLP:conf/kr/MantenoglouPA22}.

\ecde\ programs are locally stratified, and thus admit a unique perfect model~\cite{rtecarrow}.
The reasoning task in \ecde\ is to compute $\holdsAt$, i.e., the values of fluents at each time-point.

%Note that, while is possible to have cyclic dependencies on FVPs via simple FVP definitions only~\cite{DBLP:conf/kr/MantenoglouPA22}, such cyclic dependencies via statically determined FVP definitions are forbidden. 

\begin{example}[\ecde Program]\label{ex:ecde}
An \ecde program monitoring device safety may include the following rules.
\begin{align}
& \label{eq:ecde_init_verified}
\begin{mysplit}
\initiatedAt(safety(X)\val verified, T)\leftarrow \\ 
\quad \happensAt(repair(X), T).
\end{mysplit}\\[-2pt]
& \label{eq:ecde_init_unverified}
\begin{mysplit}
\initiatedAt(safety(X)\val unverified, T)\leftarrow \\ 
\quad \happensAt(warning(X), T).
\end{mysplit}\\[-2pt]
& \label{eq:ecde_holds_integrity}
\begin{mysplit}
\holdsAt(integrity\_threat\_of(X,Y)\val\true, T)\leftarrow \\ 
\quad \holdsAt(safety(X)\val trusted, T), \\
\quad \holdsAt(safety(Y)\val unverified, T), \\
\quad \holdsAt(connected(X,Y)\val \true, T).
\end{mysplit}\\[-2pt]
& \label{eq:ecde_fi_trusted}
\begin{mysplit}
\maxDuration(safety(X)\val verified, safety(X)\val trusted, 3).
\end{mysplit}
\end{align}
Rule \eqref{eq:ecde_init_verified} (resp.~rule \eqref{eq:ecde_init_unverified}) states that a device $X$ is considered $verified$ ($unverified$) after a $repair(X)$ ($warning(X)$) event occurs.
Rule \eqref{eq:ecde_holds_integrity} expresses that an $unverified$ device $Y$ poses an integrity threat to a $trusted$ device $X$ if they are connected. 
Fact \eqref{eq:ecde_fi_trusted} expresses that $X$ becomes $trusted$ after being $verified$ for three consecutive time-points.
\end{example}

\section{Embedding ESLs in \tdfpneg}\label{sec:mapping}

%\input{sections/mappings/tdfpneg}

%\subsection{Mapping plain LARS to \tdfpneg}\label{sec:lars2td}
\textbf{Plain LARS to \tdfpneg.}~A plain LARS program $P$ is not \tdfpneg because it includes temporal operators (Definition \ref{def:plain_lars}).
We propose a translation of $P$ into a \tdfpneg program $P'$ that is equivalent with respect to query evaluation for any input stream.
%
%Our goal is to translate $P$ into an equivalent \tdfpneg program $P'$, such that, for every database $D$, a LARS query $Q$ is true over $P$ and $D$ iff its \tdfpneg equivalent $Q'$ is true over $P'$ and $D$.
%
%When discussing formula satisfaction, we often omit the input database for simplicity.
%
%We use $p$ to denote a predicate with arity $n$, $\vecx$ for a tuple of $n$ object terms, $t$ for a time-point, $T'$ for a time variable, $T$ for the rule evaluation time, $d$ for a temporal window size, and $r$ for a rule in $P$.

\begin{algorithm}[t]
\caption{$\larstotd$}
\label{alg:larstotd}
\small
\textbf{Input}: A plain LARS program $P$.\\
\textbf{Output}: A \tdfpneg program $P'$.
\begin{algorithmic}[1] %[1] enables line number
\State $Pred(P')\defeq \emptyset$, $P'\defeq \emptyset$ \label{lars2td:init}
\ForAll{predicates $p\text{/}n$ in $P$}~$Pred(P').\myadd(p\text{/}(n\plus 1))$ \label{lars2td:addpred}
\EndFor

\State $all\_rules\_t\defeq \emptyset$ \label{lars2td:forallrulest}
\ForAll{rules $r\in P$} \label{lars2td:forr}
\State $rules\_t \defeq \{(r, \emptyset)\}$ \label{lars2td:initrulest}
\ForAll{time variables $T'$ appearing in $r$} \label{lars2td:fortvar}
    \State \textbf{let} $\larswindow{d}\larsat{T'}{p}(\vecx)$ \textbf{be} $first\_positive\_at(r, T')$ \label{lars2td:first_pos}
    \ForAll{$(r',\mu)\in rules\_t$} \label{lars2td:forrulestsofar}
    \State $rules\_t.\myremove((r', \mu))$ \label{lars2td:removerulest}
    \ForAll{$i{:} 0{\leq} i{\leq} d$}~$rules\_t.\myadd((r'{,} \mu{\cup} \{T'{\mapsto} T\minus i\}))$ \label{lars2td:addrulest}
    \EndFor
    \State $all\_rules\_t.\myaddall(rules\_t)$ \label{lars2td:addallrulest}
    \EndFor
\EndFor
\EndFor

\ForAll{$(r, \mu)\in all\_rules\_t$} \label{lars2td:forrulesmapped}
\State{$\replace(r, head(r), p_h(\vecx, T))$ \textbf{where} $head(r) \val p_h(\vecx)$} \label{lars2td:headsimple}
%\ElsIf{$h$ \textbf{is} $\larsat{T'} p_h(\vecx)$}\label{lars2td:headat}
%\State $\replace(r{,} h{,} p_h(\vecx{\cup}\mu(T'){,} T))$
%\EndIf
\ForAll{body conditions $b\in r$} \label{lars2td:forbodyb}
\If{$b$ \textbf{is} $p(\vecx)$}~$\replace(r, b, p(\vecx, T))$ \label{lars2td:ifatom}
\ElsIf{$b$ \textbf{is} $\larsat{T'}{p}(\vecx)$}~$\replace(r, b, p(\vecx, \mu(T')))$ \label{lars2td:replaceatsimple}
\ElsIf{$b$ \textbf{is} $\larswindow{d}\larsat{T'}{p}(\vecx)$} \label{lars2td:ifatwindowed}
\If{$T\minus d\leq \mu(T')$}~$\replace(r, b, p(\vecx, \mu(T')))$\label{lars2td:ifatwindowedvalid}
\Else~$\replace(r, b, \bot)$\label{lars2td:ifatwindowedinvalid}
\EndIf

\ElsIf{$b$ \textbf{is} $\larswindow{d}\larsdiamond p(\vecx)$} \label{lars2td:ifdiamond}
    %\If{$\tddiamond{p}{d}\not\in Pred(P')$} \label{lars2td:ifnewdiamond}
    \State $Pred(P').\myadd(\tddiamond{p}{d})$ \label{lars2td:adddiamondpred}
    \ForAll{$i{:} 0{\leq} i{\leq} d$}~$P'.\myadd(\tddiamond{p}{d}(\vecx, T){\leftarrow} p(\vecx, T\minus i))$ \label{lars2td:adddiamondrule}
    \EndFor 
    %\EndIf
    \State $\replace(r, b, \tddiamond{p}{d}(\vecx, T))$ \label{lars2td:replacediamond}

\ElsIf{$b$ \textbf{is} $\larswindow{d}\larsbox p(\vecx)$} \label{lars2td:ifbox}
    %\If{$\tdbox{p}{d}\not\in Pred(P')$} \label{lars2td:ifnewbox}
    \State $Pred(P').\myadd(\tdbox{p}{d})$ \label{lars2td:addboxpredicate}
    \State $P'{.}\myadd(\tdbox{p}{d}(\vecx{,} T){\leftarrow} p(\vecx{,} T){,} \dots{,} p(\vecx{,} T\minus d))$ \label{lars2td:addboxrule}
    %\EndIf
    \State $replace(r, b, \tdbox{p}{d}(\vecx, T))$ \label{lars2td:replacebox}
\EndIf
\EndFor
\State $P'.\myadd(r)$
\EndFor
\State \Return $P'$ \label{lars2td:return}
\end{algorithmic}
\end{algorithm}

Algorithm \ref{alg:larstotd} outlines our translation.
To construct $P'$, we first introduce predicates that denote the atoms in $P$.
For each predicate $p$ with arity $n$ in $P$, we introduce a predicate $p$ with arity $n\plus 1$ in $P'$, where the extra argument is reserved for a time term (see line \ref{lars2td:addpred}).
In order for $P'$ to be \tdfpneg, each rule in $P'$ must contain one time variable $T$, reflecting its evaluation time (see rule schema \eqref{eq:tdfpneg}).
To cater for this, we need to map, in each rule $r$ of $P$, each time variable $T'$ appearing in $\larsatone{T'}$ operators to time terms $T\minus k$.
In plain LARS, $T'$ is grounded based on the first positive $\larswindow{d}\larsat{T'}{p}(\vecx)$ body condition of $r$.
Thus, we replace $r$ with $d\plus 1$ rules such that in the $i$-th rule, where $i\in\{0, \dots, d\}$, $T'$ is mapped to $T\minus i$, denoted via a mapping $\mu$ (see lines \ref{lars2td:forallrulest}--\ref{lars2td:addallrulest}).
This reflects the semantics of $\larsat{T'}{p(\vecx)}$ as an existential quantifier that also maintains via $T'$ the time-points in the window where $p(\vecx)$ holds.

\begin{example}[Plain LARS to \tdfpneg]\label{ex:lars2td}
Consider the plain LARS program in Example \ref{ex:lars}.
Rule \eqref{eq:lars_integrity_threat} contains one time variable $T'$, grounded via $\larswindow{2}{\larsat{T'}{unverified(Y)}}$. 
Thus, Algorithm \ref{alg:larstotd} produces three instances of this rule, with mappings $T'{\mapsto} T\minus 2$, $T'{\mapsto} T\minus 1$ and $T'{\mapsto} T$.
\end{example}

Afterwards, we iterate over each discovered $(r, \mu)$ pair and substitute each atom in $r$ with a \tdfpneg atom (lines \ref{lars2td:forrulesmapped}--\ref{lars2td:replacebox}).
For the head $p_h(\vecx)$ of $r$, we append to $\vecx$ a time variable $T$ denoting the rule evaluation time (line \ref{lars2td:headsimple}).
%
%Otherwise, if $h$ is $\larsat{T'} p_h(\vecx)$, then we also add $\mu(T')$ to the arguments of $p_h$ (line \ref{lars2td:headat})---reified as a concrete object so that $T$ is the only time term of $p_h$---while silently adding the necessary predicates and rules to accommodate this arity increase.
%
Then, we iterate over each body condition $b$ in $r$.
If $b$ contains negation, we implicitly replace ``$\neg$'' with ``$\nbf$'', as their semantics coincide for globally stratified plain LARS programs~\cite{DBLP:conf/semweb/BazoobandiBU17}.
Then, setting negation aside, we distinguish the following cases.
If $b$ is $p(\vecx)$ (resp.~$\larsat{T'}{p}(\vecx)$), then we replace $b$ with $p(\vecx, T)$ ($p(\vecx, \mu(T'))$) in $r$ (lines \ref{lars2td:ifatom}--\ref{lars2td:replaceatsimple}).
%
%If $b$ is $\larsat{T'}{p}(\vecx)$, then we replace $b$ with $p(\vecx, \mu(T'))$ in $r$ (line \ref{lars2td:replaceatsimple}).
%
If $b$ is $\larswindow{d}\larsat{T'}{p}(\vecx)$, then we distinguish two cases.
If $\mu(T')$ falls within $[T\minus d, T]$, then we replace $b$ with $p(\vecx, \mu(T'))$ (line \ref{lars2td:ifatwindowedvalid}).
Otherwise, if $\mu(T'){<}T\minus d$, then we cannot satisfy $b$, and thus disable the rule by replacing $b$ with $\bot$ (line \ref{lars2td:ifatwindowedinvalid}).
If $b$ is $\larswindow{d}\larsdiamond p(\vecx)$ (resp.~$\larswindow{d}\larsbox p(\vecx)$), we introduce predicate $\tddiamond{p}{d}$ ($\tdbox{p}{d}$) with arity $n\plus 1$ and replace $b$ with $\tddiamond{p}{d}(\vecx, T)$ ($\tdbox{p}{d}(\vecx, T)$) (lines \ref{lars2td:ifdiamond}--\ref{lars2td:replacebox}).
Then, we add rules in $P'$ that define $\tddiamond{p}{d}(\vecx, T)$ ($\tdbox{p}{d}(\vecx, T)$) as the disjunction (conjunction) of $p(\vecx{,} T), \dots, p(\vecx{,} T\minus d)$.

\begin{example}[Example \ref{ex:lars2td} cont'd]\label{ex:lars2td_cont}
To translate rule \eqref{eq:lars_verified}, Algorithm \ref{alg:larstotd} replaces its head with $verified(X,T)$, its body with $repair_{\larsdiamond}^{2}(X,T)$ and introduces three rules of the form: ``$repair_{\larsdiamond}^{2}(X,T)\leftarrow repair(X,T\minus i).$'', for $i\in \{0, 1, 2\}$.
Rules \eqref{eq:lars_trusted}--\eqref{eq:lars_unverified} are translated similarly by mapping the $\larswindow{d}\larsbox$ operators they contain according to lines \ref{lars2td:ifbox}--\ref{lars2td:replacebox}.
Rule \eqref{eq:lars_integrity_threat} is translated into three rules, one for each mapping of $T'$ (see Example \ref{ex:lars2td}).
For $T'{\mapsto} T\minus 1$, e.g., we construct this rule:
\begin{align*}
\begin{mysplit}
integrity\_threat\_of(X, Y, T)\leftarrow \\
   \quad trusted_{\larsbox}^{2}(X,T), unverified(Y, T\minus 1), \\
   \quad connected(X, Y, T\minus 1).
\end{mysplit}
\end{align*}
\end{example}

\begin{theorem}[$\larstotd$ Correctness]
Consider a plain LARS program $P$, a stream $S$ and let Algorithm \ref{alg:larstotd} map $P$ to $P'$.
$P'$ is \tdfpneg, and given that $\perfect$ is the perfect model of $P'\cup S$, at time-point $t$, $P\cup S$ derives: 
\begin{compactitem}
\item $p(\vecx)$ iff $\perfect\models p(\vecx, t)$
\item $\larsatone{t'}p(\vecx)$ iff $\perfect\models p(\vecx, t')$
\item $\larswindow{d}\larsatone{t'}p(\vecx)$ iff $\perfect\models p(\vecx, t')$ and $t\minus d\leq t'\leq t$.
\item $\larswindow{d}\larsdiamond p(\vecx)$ iff $\perfect\models \tddiamond{p}{d}(\vecx, t)$
\item $\larswindow{d}\larsbox p(\vecx)$ iff $\perfect\models \tdbox{p}{d}(\vecx, t)$.
\end{compactitem}
\end{theorem}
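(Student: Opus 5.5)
The proof has two parts. First, $P'$ is a \tdfpneg program. Second, the perfect model of $P'\cup S$ coincides, time-point by time-point, with the answer stream of $P\cup S$ under the five listed correspondences. The second part is an induction along a stratification.

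\textbf{$P'$ is \tdfpneg.} Every rule produced by Algorithm \ref{alg:larstotd} has a head of the form $p_h(\vecx, T)$. Every body time term is $T$, $\mu(T')=T\minus i$ with $0\leq i\leq d$, or $T\minus i$ with $0\leq i\leq d$ in the auxiliary $\tddiamond{p}{d}$ and $\tdbox{p}{d}$ rules. So each rule has exactly one time variable, and all offsets are non-negative, which gives Definition \ref{def:fptp}. Rules containing $\bot$ never fire and can be dropped.

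For temporal stratification, take a global stratification $\pred_1,\dots,\pred_n$ of $P$ (plain LARS programs are assumed globally stratified). Place each auxiliary predicate $\tddiamond{p}{d}$ or $\tdbox{p}{d}$ in the stratum of $p$, and set $B_i=\{(q,t)\mid q\in\pred_i\}$ for every $t$. Then positive dependencies stay within or below a stratum, and negative dependencies go strictly down, exactly as in $P$. Hence this is a local stratification of the reduced program $P'$, and Definition \ref{def:tempstrat} holds.

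\textbf{Semantic correspondence.} I would proceed by induction on the strata $\pred_1,\dots,\pred_n$. Within a stratum I would argue by the least-fixpoint construction of the perfect model, inducting on the iteration $k$ at which $p(\vecx,t)$ first appears. The key lemma is a single-rule statement: for a fixed evaluation time $t$, a ground body of a rule $r\in P$ is satisfied in the LARS answer stream iff, for some mapping $\mu$ produced in lines \ref{lars2td:forallrulest}--\ref{lars2td:addallrulest}, the corresponding ground body of $(r,\mu)$ in $P'$ is satisfied in $\perfect$. I would check this condition by condition, which is exactly the five bullets applied to the body atoms.

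\begin{compactitem}
\item $p(\vecx)$ at $t$ corresponds to $p(\vecx,t)$ by definition.
\item $\larsat{t'}{p(\vecx)}$ corresponds to $p(\vecx,t')$.
\item $\larswindow{d}{\larsat{t'}{p(\vecx)}}$ adds the window restriction $t\minus d\leq t'\leq t$. This restriction is enforced either by the choice of $\mu$ or by replacing the condition with $\bot$.
\item $\larswindow{d}{\larsdiamond p(\vecx)}$ holds iff some $p(\vecx,t\minus i)$ with $i\leq d$ holds. Because $\tddiamond{p}{d}$ is defined only by the $d\plus 1$ auxiliary rules, this is iff $\tddiamond{p}{d}(\vecx,t)\in\perfect$.
\item $\larswindow{d}{\larsbox p(\vecx)}$ is analogous, using the single conjunctive rule defining $\tdbox{p}{d}$.
\end{compactitem}

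Negated conditions are handled by the induction hypothesis on lower strata, since $\neg$ and $\nbf$ coincide for globally stratified plain LARS. Once the body correspondence is established, the head $p_h(\vecx)$ is derived at $t$ in $P\cup S$ iff $p_h(\vecx,t)\in\perfect$. The remaining bullets then follow immediately from the first bullet together with the auxiliary-rule argument above.

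\textbf{Main obstacle.} The hardest step is the treatment of the time variables $T'$. The enumeration of $\mu$ must be exhaustive and sound. For soundness, plain LARS requires each $T'$ to be bound by its first positive $\larswindow{d}{\larsat{T'}{\cdot}}$, so every satisfying LARS assignment to $T'$ lies in $[t\minus d,t]$ and is captured by some $\mu$. For exhaustiveness, every $\mu$-instance with a satisfied body yields a genuine LARS satisfaction. When a rule contains several time variables, the nested loop creates the product of their mappings; I would argue exhaustiveness by induction over that loop. I also have to confirm that truncating at $t\minus d<1$, where the stream has no time-points, causes no discrepancy; this holds because no fact with such a time-point exists on either side.
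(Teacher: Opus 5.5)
\textbf{Overall route.} Your argument follows the paper's route. You check that each rule has one time variable and non-negative offsets, and you inherit a temporal stratification from the global stratification of $P$; placing $\tddiamond{p}{d}$ and $\tdbox{p}{d}$ in the stratum of $p$ is a useful explicit detail. You then induct over $\pred_1,\dots,\pred_n$, discharging negated conditions by the hypothesis on lower strata and positive ones by the faithfulness of each operator translation. Your within-stratum fixpoint induction and the explicit $\mu$-enumeration only refine this.

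\textbf{The gap: the box operator at the start of the stream.} You claim that truncation at $t\minus d<1$ causes no discrepancy because no fact with such a time-point exists on either side. That reason works for the existential operators, $\larswindow{d}{\larsdiamond}$ and windowed $\larsatone{T'}$. It fails for $\larswindow{d}{\larsbox}$.
\begin{itemize}
\item The standard LARS time-based window at $t$ is clipped to the stream's timeline, i.e.\ it is $[\max(1,t\minus d),t]$. So for $t\le d$ the box is vacuously satisfied at the missing points.
\item The single conjunctive rule for $\tdbox{p}{d}$, by contrast, demands $p(\vecx,t\minus i)$ at non-positive time-points, which never exist.
\item Concretely, take $P=\{q\leftarrow \larswindow{2}{\larsbox p}\}$ and a stream with $p$ at time-point $1$. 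LARS derives $q$ at $t=1$. But $\tdbox{p}{2}(T)\leftarrow p(T),p(T\minus 1),p(T\minus 2)$ cannot fire at $T=1$, so $q(1)\notin\perfect$.
\item Dually, negated boxes are evaluated differently at such $t$, so the discrepancy propagates to higher strata.
\end{itemize}
Under the clipped semantics, the fifth bullet is false for $t\le d$.

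\textbf{How to close it.} The paper's sketch is silent here too; it only spells out the $\larsdiamond$ case. So this is a defect you inherit from the translation and statement rather than introduce, but your argument asserts it away. You have two options.
\begin{itemize}
\item State the convention that the window is the full set $\{t\minus d,\dots,t\}$, with non-positive time-points satisfying no atom. Then both sides are false for $t\le d$, and your argument goes through verbatim.
\item Keep the clipped LARS semantics and add boundary rules for $t\le d$ to the translation, e.g.\ via an auxiliary predicate marking time-point $1$.
\end{itemize}

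\textbf{A smaller point on the $T'$ loop.} As written, line~\ref{lars2td:addallrulest} sits inside the loop over $T'$. Rules without any $\larsatone{T'}$ therefore never reach $all\_rules\_t$, and rules with several time variables also deposit partial mappings there. Before arguing exhaustiveness by induction over that loop, fix the intended reading: each rule contributes exactly its total mappings, with $\mu=\emptyset$ if it has no $T'$.
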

\begin{proof}[Proof Sketch]
Rules in $P'$ each contain one time variable, are forward-propagating (see, e.g., lines \ref{lars2td:adddiamondrule}, \ref{lars2td:addboxrule} and \ref{lars2td:addrulest}), and $P'$ is temporally stratified since $P$ is globally stratified and Algorithm \ref{alg:larstotd} introduces no negative dependencies.
Thus, $P'$ is \tdfpneg (Definition \ref{def:tdfpneg}).
For model equivalence, we follow an inductive proof on the global stratification $\pred_1, \dots, \pred_n$ of $P$. % which yields a temporal stratification $\myvec{B_1}, \myvec{B_2}, \dots$ for $P'$, where $\myvec{B_t}\val B_t^1 \dots B_t^n$ and $B_t^i\val \{(p, t)| p\in \pred_i\}$, where $i\in [n]$, for each $t\geq 1$.
For extensional predicates, correctness follows from the faithfulness of the translation: e.g., $\larswindow{d}\larsdiamond p(\vecx)$ holds iff $p(\vecx)$ holds now or at least once over the last $d$ time-points, matching the definition of  $\tddiamond{p}{d}$ (line \ref{lars2td:adddiamondrule}).
For the inductive step at $\pred_i$, negative body conditions in any rule defining $p\in\pred_i$ refer to lower strata and are thus evaluated identically by $P$ and $P'$.
Correctness then reduces to the faithfulness of the translation of the temporal operators, which guarantees that the positive conditions are also evaluated equivalently.
\end{proof}

%\subsection{Mapping \ecde\ to \tdfpneg}\label{sec:ecde2td}

\noindent\textbf{\ecde\ to \tdfpneg.}~We consider an \ecde program $P$ consisting of rules \eqref{eq:delayed-effects-inwindow}--\eqref{eq:init_diff}, as well as rules following schemata \eqref{eq:ec-initiatedAt-schema} and \eqref{eq:ec-holdsAt-schema}, and possibly $\maxDuration$ and $\extensible$ facts.
$P$ contains functors, in the form of event and fluent types, atoms with multiple time terms (see rules \eqref{eq:delayed-effects-inwindow}--\eqref{eq:delayed-effects-conditions2}), and an inequality comparison in rule \eqref{eq:init_diff}.
These elements are not in \tdfpneg and thus we need to translate $P$ into a \tdfpneg program $P'$.

\begin{algorithm}[t]
\caption{$\ectotd$}
\label{alg:ectotd}
\small
\textbf{Input}: An \ecde program $P$.\\
\textbf{Output}: A \tdfpneg program $P'$.
\begin{algorithmic}[1] %[1] enables line number
\State $Pred(P')\defeq \emptyset$, $P'\defeq \emptyset$ \label{ec2td:init}
\ForAll{event types $e\text{/}n$ in $P$}~$Pred(P').\myadd(e\text{/}(n\plus 1))$ \label{ec2td:eventtypes}
\EndFor
\ForAll{fluent types $f\text{/}n$ in $P$} \label{ec2td:fluenttypes}
\State $Pred(P').\myadd(f^h\text{/}(n\plus 2))$ \label{ec2td:addfhpred}
\If{$f$ simple}~$Pred(P').\myaddall(f^i\text{/}(n\plus 2){,}f^t\text{/}(n\plus 2))$ \label{ec2td:addsimplepreds}
\EndIf
\EndFor

\ForAll{rules $r\in P$ following schema \eqref{eq:ec-initiatedAt-schema} or schema \eqref{eq:ec-holdsAt-schema}} \label{ec2td:fordomainrule}
    \State $P'.\myadd(\mapecatom(head(r)){\leftarrow} \bigwedge_{b\in body(r)} [\mathsf{not}]\ \mapecatom(b))$ \label{ec2td:adddomainrule}
\EndFor

\ForAll{simple fluent types $f$ \textbf{and} values $v$ of $f$} \label{ec2td:foragnostic}
    \State $P'.\myadd(f^h(\vecx, v,T)\leftarrow f^i(\vecx, v, T\minus 1))$ \label{ec2td:addinertia1}
    \State \hspace{-2pt}$P'.\myadd(f^h(\vecx{,} v{,}T){\leftarrow} f^h(\vecx{,} v{,} T\minus 1),\mathsf{not}\ f^t(\vecx{,} v{,} T\minus 1))$ \label{ec2td:addinertia2}
    \ForAll{values $v'\nval v$ of $f$} \label{ec2td:forvalue}
        \State $P'.\myadd(f^t(\vecx, v,T)\leftarrow f^i(\vecx, v', T))$ \label{ec2td:addinitdiff}
    \EndFor
\EndFor

\ForAll{$\mathsf{fi}(f(\vecx)\val v, f(\vecx)\val v', d)\in P$} \label{ec2td:forfi}
    \State $b\defeq f^i(\vecx, v, T\minus d), \mathsf{not}\ f^t(\vecx, v, T\minus d\plus 1), \dots,$  \label{ec2td:bodybasic}
    \Statex \hspace{88.5pt}$\mathsf{not}\ f^t(\vecx, v, T\minus 1)$ 
    \If{$\mathsf{p}(f(\vecx)\val v)\in P$} \label{ec2td:extensible}
        \State \hspace{-2pt}$b.\myadd(\mathsf{not}\ f^i(\vecx, v, T\minus d\plus 1), \dots,\mathsf{not}\ f^i(\vecx, v, T\minus 1))$ \label{ec2td:addrestbody}
    \EndIf
    \State $P'.\myadd(f^i(\vecx, v',T)\leftarrow b)$ \label{ec2td:addfi}
\EndFor

\State \Return $P'$ \label{ec2td:return}
\Function{$\mapecatom$}{$a$} \label{ec2td:mapfunc}
   \State{\textbf{case} $a$: $\mathsf{happensAt}(e(\vecx), T)$}~\textbf{return} $e(\vecx, T)$ \label{ec2td:maphappens}
    \State{\textbf{case} $a$: $\mathsf{initiatedAt}(f(\vecx)\val v{,} T)$}~\textbf{return} $f^i(\vecx{,} v{,} T)$ \label{ec2td:mapinit}
    \State{\textbf{case} $a$: $\mathsf{terminatedAt}(f(\vecx)\val v{,} T)$}~\textbf{return} $f^t(\vecx{,} v{,} T)$ \label{ec2td:mapterm}
    \State{\textbf{case} $a$: $\mathsf{holdsAt}(f(\vecx)\val v{,} T)$}~\textbf{return} $f^h(\vecx{,} v{,} T)$ \label{ec2td:mapholds}
    %\State{\textbf{case} $a$: atemporal fact $p(\vecx)$}~\textbf{return} $p(\vecx)$ \label{ec2td:atemporal}
    %\EndIf 
   \EndFunction 
\end{algorithmic}
\end{algorithm}

Algorithm \ref{alg:ectotd} outlines the translation steps.
Towards an equivalent program without functors, for each event type $e$ in $P$, we introduce to $P'$ one predicate $e$, adding one more argument to express time (see line \ref{ec2td:eventtypes}).
Similarly, for each fluent type $f$ in $P$ we introduce a predicate $f^h$ in $P'$, denoting when the fluent holds, and equip it with a value argument and a time argument (lines \ref{ec2td:fluenttypes}--\ref{ec2td:addfhpred}).
If $f$ is a simple fluent type, then we additionally introduce predicates $f^i$ and $f^t$ to denote its initiations and terminations (line \ref{ec2td:addsimplepreds}).
As denoted by the $\mapecatom$ function (lines \ref{ec2td:mapfunc}--\ref{ec2td:mapholds}), these new predicates correspond to \ecde predicates for particular events and fluents.

Next, we compile each domain rule, following schema \eqref{eq:ec-initiatedAt-schema} or schema \eqref{eq:ec-holdsAt-schema}, by mapping its head and body atoms to the introduced \tdfpneg predicates (lines \ref{ec2td:fordomainrule}--\ref{ec2td:adddomainrule}). 
Rules \eqref{eq:holdsAt1_discrete_inertia}--\eqref{eq:init_diff} involve fluent variables, while the predicates we introduced are fluent-specific.
To cater for this, we add in $P'$ one such rule for each fluent type $f$ in $P$, while grounding variable $V$ according to the possible values of $f$ (lines \ref{ec2td:foragnostic}--\ref{ec2td:addinitdiff}).
We follow a similar route for rules \eqref{eq:delayed-effects-inwindow}--\eqref{eq:delayed-effects-conditions2}, while additionally compiling away the predicates with two time terms by explicitly adding a body condition referring to each time-point in the range they define (lines \ref{ec2td:forfi}--\ref{ec2td:addfi}). 

\begin{example}[\ecde to \tdfpneg]\label{ex:ecde2td}
Consider the \ecde program in Example \ref{ex:ecde}.
Algorithm \ref{alg:ectotd} translates rules \eqref{eq:ecde_init_verified}--\eqref{eq:ecde_holds_integrity} by mapping their atoms using $\mapecatom$. 
Rule \eqref{eq:ecde_init_verified}, e.g., is translated into: ``$safety^i(X, verified, T){\leftarrow} repair(X, T).$''.
Fact \eqref{eq:ecde_fi_trusted} is expressed in \tdfpneg\ using the following rule:
\begin{align*}
& %\label{eq:ecde_translation_fi_trusted}
\begin{mysplit}
safety^i(X, trusted, T)\leftarrow \\
\quad safety^i(X, verified, T\minus 3), \\
\quad \nbf\ safety^t(X, verified, T\minus 2), \\
\quad \nbf\ safety^t(X, verified, T\minus 1).
\end{mysplit}
\end{align*}
Moreover, for each value of $safety(X)$, we construct rules expressing its default persistence and its incompatibility with other values. 
For value $trusted$, e.g., we have:
\begin{align*}
& %\label{eq:ecde_trans_inertia1}
\begin{mysplit}
safety^h(X, trusted, T)\leftarrow safety^i(X, trusted, T\minus 1). 
\end{mysplit}\\[-2pt]
& %\label{eq:ecde_trans_inertia2}
\begin{mysplit}
safety^h(X, trusted, T)\leftarrow safety^h(X{,} trusted{,} T\minus 1){,} \\
\hspace{113pt}\nbf\ safety^t(X{,} trusted{,} T\minus 1). 
\end{mysplit}\\[-2pt]
& %\label{eq:ecde_trans_diff_val1}
\begin{mysplit}
safety^t(X, trusted, T)\leftarrow safety^i(X, unverified, T).
\end{mysplit}\\[-2pt]
& %\label{eq:ecde_trans_diff_val2}
\begin{mysplit}
safety^t(X, trusted, T)\leftarrow safety^i(X, verified, T).
\end{mysplit}
\end{align*}

\end{example}

\begin{theorem}[$\ectotd$ Correctness]
Consider an \ecde program $P$, a stream $S$ and let Algorithm \ref{alg:ectotd} map $P$ to $P'$.
$P'$ is \tdfpneg, and, for each fluent $f(\vecx)$, value $v$ of $f$ and time-point $t$, $P\cup S$ derives $\holdsAt(f(\vecx)\val v, t)$ iff ${\perfect\models f^h(\vecx, v, t)}$, where $\perfect$ is the perfect model of $P'\cup S$.
\end{theorem}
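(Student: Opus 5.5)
The proof has two parts. First we show that $P'$ is \tdfpneg. Then we show that the perfect model of $P\cup S$ and $\perfect$ agree on the translated atoms, by induction along time combined with an inner induction on the dependency order of predicates at a fixed time-point.

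\textbf{Syntactic part.} Every rule that Algorithm \ref{alg:ectotd} adds has exactly one time variable $T$ and a head time term $T$. Every body time term is $T\minus k$ with $k\geq 0$:
\begin{compactitem}
\item $k=0$ for translated domain rules (line \ref{ec2td:adddomainrule}) and for line \ref{ec2td:addinitdiff};
\item $k=1$ for lines \ref{ec2td:addinertia1}--\ref{ec2td:addinertia2};
\item $k\in\{1,\dots,d\}$ for line \ref{ec2td:addfi}, because $d$ is a positive integer.
\end{compactitem}
So $P'$ is forward-propagating (Definition \ref{def:fptp}). It contains no functors, no atoms with two time terms and no inequality, since $V'\nval V$ has been compiled away by grounding the values.

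For temporal stratification (Definition \ref{def:tempstrat}), project $P'$ to unary predicates and build a local stratification indexed by $(t,\ell)$, ordered lexicographically. Here $\ell$ is a level within time-point $t$, assigned as follows.
\begin{compactitem}
\item Event predicates get the lowest level.
\item Simple-fluent $f^h$ atoms at $t$ depend only on atoms at $t\minus 1$ (lines \ref{ec2td:addinertia1}--\ref{ec2td:addinertia2}), so they sit just above the event level.
\item Same-time dependencies come from schemas \eqref{eq:ec-initiatedAt-schema} and \eqref{eq:ec-holdsAt-schema} and from line \ref{ec2td:addinitdiff}. They run from $f^i$ and $f^t$ to events and $f^h$, from statically determined $f^h$ to other $f^h$, and from $f^t$ to $f^i$.
\end{compactitem}
I will rely on the standing \ecde assumption, cited when the paper states that \ecde programs are locally stratified~\cite{rtecarrow}, that statically determined fluent definitions are acyclic. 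Under it the same-time dependency graph is acyclic, so a finite ordering of levels exists. Negative dependencies on atoms at earlier time-points point to strictly lower $t$, so they are harmless.

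\textbf{Semantic part.} I will prove a stronger claim by induction on $t$. For every $t$, $P\cup S$ derives $\happensAt(e(\vecx),t)$, $\initiatedAt(f(\vecx)\val v,t)$, $\terminatedAt(f(\vecx)\val v,t)$ and $\holdsAt(f(\vecx)\val v,t)$ iff $\perfect$ contains $e(\vecx,t)$, $f^i(\vecx,v,t)$, $f^t(\vecx,v,t)$ and $f^h(\vecx,v,t)$ respectively. At a fixed $t$, the inner induction follows the within-time levels above. Perfect models can be computed stratum by stratum in both programs, so when an atom is evaluated, all negated atoms in its rule bodies are already fixed, identically in both models by the induction hypothesis.

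\textbf{Case analysis.}
\begin{compactitem}
\item \emph{Events:} both sides are read directly from $S$.
\item \emph{Domain rules:} the rules of line \ref{ec2td:adddomainrule} are literal renamings via $\mapecatom$.
\item \emph{Inertia:} line \ref{ec2td:addinertia1} and line \ref{ec2td:addinertia2} are the ground instances of rules \eqref{eq:holdsAt1_discrete_inertia}--\eqref{eq:holdsAt2_discrete_inertia} for each value $v$.
\item \emph{Termination by a different value:} line \ref{ec2td:addinitdiff} enumerates exactly the $V'\nval V$ instances of rule \eqref{eq:init_diff}.
\end{compactitem}

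\textbf{Main obstacle: delayed effects.} The hardest step is showing that line \ref{ec2td:addfi} captures rules \eqref{eq:delayed-effects-inwindow}--\eqref{eq:delayed-effects-conditions2} with head time $T\plus d$. Shift the variable so that the head is at $t$ and the initiation is at $t\minus d$. By definition, $\terminatedIn(F\val V,t\minus d,t)$ holds iff some $\terminatedAt$ holds at a time strictly between $t\minus d$ and $t$, that is, at one of $t\minus d\plus 1,\dots,t\minus 1$. The same holds for $\initiatedIn$. Hence $\nbf\ \cancelled$ is equivalent to the conjunction of the negated $f^t$ atoms over that range. When $\extensible(f(\vecx)\val v)\in P$, the negated $f^i$ atoms added in line \ref{ec2td:addrestbody} are also needed. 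When $\extensible$ is absent, rule \eqref{eq:delayed-effects-conditions2} never fires, and the base body of line \ref{ec2td:bodybasic} suffices. All of these atoms lie at times strictly before $t$, so the outer induction hypothesis applies to them.

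One subtlety must be checked for $d=1$: the range is empty, so the body reduces to $f^i(\vecx,v,T\minus 1)$ and $\cancelled$ is false, which matches. Another is that in \ecde the $\maxDuration$ and $\extensible$ facts are fixed. Only their ground instances therefore matter, and the algorithm produces exactly one rule for each such instance.
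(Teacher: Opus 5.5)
Your proposal is correct and follows essentially the same route as the paper's sketch. You check the single time variable and non-negative offsets, derive temporal stratification from the fact that every negative dependency either points strictly into the past or lies in an acyclic same-time structure (which is the paper's three-case analysis of where negation occurs), and then argue rule-by-rule faithfulness, making explicit the $(t,\ell)$ stratification and the unfolding of $\cancelled$ into negated $f^t$ (and, with $\extensible$, $f^i$) conditions over $T\minus d\plus 1,\dots,T\minus 1$ that the paper compresses into ``faithfulness up to predicate renaming''.
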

\begin{proof}[Proof sketch]
All rules added by Algorithm~\ref{alg:ectotd} contain one time variable and are forward-propagating.
Negation appears (i) on conditions with a positive temporal offset (e.g., line \ref{ec2td:addinertia2}), (ii) in a globally stratified statically determined fluent definition, lacking cyclic dependencies~\cite{DBLP:conf/kr/MantenoglouPA22}, and (iii) in an $f^h$ condition of a rule defining $f'^i$ or $f'^t$ ($f\nval f'$).
Case (iii) preserves temporal stratification as $f^h$ may depend negatively on $f'^i$ or $f'^t$ only via a positive time offset. % which does not compromise temporal stratification.
Therefore, $P'$ is \tdfpneg (Definition \ref{def:tdfpneg}).
Semantic equivalence follows from the faithfulness of the translation up to predicate renaming.
%
%
%To prove semantic equivalence, we note that all rules are translated faithfully into \tdfpneg up to the predicate convention changes.
%
\end{proof}

%To address issue (i), we propose Stratified Trigger Graphs (STGs), which extend TGs to support stratified negation.
%
%In the context of stream reasoning, where events arrive incrementally from an unbounded stream, we may attempt to tackle issue (ii) by generating a TG $G^t$ that materialises the
%
%The challenge then lies in handling data that arrive after $G^t$ has been generated.
%
%Recomputing a TG from scratch over all data seen so far is prohibitively expensive, while na\"{\i}vely updating $G^t$ incrementally may lead to incorrect materialisations.
%
%To overcome this limitation, we further extend TGs to support time and enable sound incremental updates, thereby ensuring correct results over all data observed so far.
%

%\subsection{Stratified Negation in Trigger Graphs}\label{sec:gstgs}

%We define Stratified Trigger Graphs (STGs), i.e., an extension of TGs that allows for correct materialisation over globally stratified Datalog programs.
%

\section{Streaming Trigger Graphs}\label{sec:reasoning}
We cannot use TGs to materialise \tdfpneg programs because (i) they do not support negation and (ii) they do not support variables with countably infinite groundings, such as time variables.
The presence of such variables makes the order of rule evaluation crucial, as materialisation can become trapped in an infinite chain of derivations for one predicate, thereby preventing the derivation of facts for other predicates.
Tackling this issue by generating a TG $G^t$ that materialises the program up to the latest time point $t$ is problematic because incrementally updating $G^t$ in the presence of new data may lead to incorrect results, while recomputing a TG from scratch at every time step is prohibitively expensive in the streaming setting.
To address these issues, we propose Streaming Trigger Graphs (\stgs), which constitute an optimised computational model for \tdfpneg.

We start with Streaming Execution Graphs (\segs), which extend EGs with negation and time.
Mirroring the relationship between EGs and TGs, we then define \stgs as \segs that materialise \tdfpneg programs correctly.
\begin{definition}[Streaming Execution Graph]\label{def:tneg}
A Streaming Execution Graph (SEG) for a \tdfpneg program $P$ is a graph $G\val (V,\pedges,\nedges, \nodelabelling, \timelabelling,\edgelabelling)$, where:
\begin{compactitem}
    \item $V$ is a set of nodes.
    \item $\pedges$ is a set of directed edges called p-edges.
    \item $\nedges$ is a set of n-edges, i.e., labelled directed hyperedges $(U, \{v\})$, where $U\subset V$. 
    \item $\nodelabelling$ is a function mapping each node in $V$ to a rule in $P$.
    \item $\timelabelling$ is a mapping of each node in $V$ to a time-point.
    \item $\edgelabelling$ is a function labelling each p-edge $(u, v)$ in $\pedges$ and each n-edge $(U, \{v\})$ in $\nedges$ with a positive integer $j$, denoted respectively as $u\labellededge{j} v$ and ${U\labellededge{j} \{v\}}$.
    We have $u\labellededge{j} v$ (resp.~$u\in U$, where ${U\labellededge{j} \{v\}}$), only if:
    \begin{compactitem}
        \item the predicate of the $j$-th positive (negative) condition of $\nodelabelling(v)$ equals the head predicate of $\nodelabelling(u)$, and
        \item $\timelabelling(u)$ is equal to $\timelabelling(v)$ minus the time offset of the $j$-th positive (negative) body condition in $\nodelabelling(v)$.
    \end{compactitem}
\end{compactitem} 
For each non-leaf node $v$, there is exactly one p-edge ${u {\labellededge{j}} v}$ or n-edge $U{\labellededge{j}} \{v\}$ for the $j$-th condition in $\nodelabelling(v)$.
\end{definition}

Considering a rule $r$ with head $h$, and sets $b^+$ and $b^-$, containing respectively the atoms in positive and in negative body conditions of $r$, if there is a substitution $\sigma$ on the variables in $r$ such that $b^+_i\sigma$ holds for all $b^+_i$ in $b^+$, $r$ derives $h\sigma$ only if $b^-_i\sigma$ does not hold for any $b^-_i$ in $b^-$.  
Intuitively, the atoms in $b^-$ act as filters that disqualify substitutions satisfying at least one such atom.
To reflect this, Definition \ref{def:tneg} uses an n-edge $U\labellededge{i} \{v\}$, where $\nodelabelling(v)\val r$, to denote that if at least one node $u\in U$ contains fact $b^-_i\sigma$ then fact $h\sigma$ should not be introduced in $v$.
Moreover, according to Definition \ref{def:tneg}, \segs restrict body condition evaluation over facts with the appropriate time offset compared to the head predicate.
This minimises the atoms considered in joins during materialisation without compromising correctness.
\begin{definition}[Facts in \seg]\label{def:tnegexec}
Let $P$ be a \tdfpneg program, $S$ a stream, $G$ a \seg for $P$, $v$ a node in $G$ and $\sigma$ a substitution for $\nodelabelling(v)$.
The rules in $P$ are partitioned into extensional and intensional. 
$h$ and $b^+$ (resp.~$b^-$) are the head and the set of atoms in positive (negative) conditions of $\nodelabelling(v)$.
$v(S)$ contains a fact $h\sigma$ if:
\begin{compactitem}
    \item $\nodelabelling(v)$ is extensional and for each atom $b_i$ in $b^+$ ($b^-$) we have $b_i\sigma\in S$ ($b_i\sigma\notin S$), or
    \item $\nodelabelling(v)$ is intensional, for each atom $b_i$ in $b^+$ we have $b_i\sigma\in u(S)$, where $u\labellededge{i} v$ is a p-edge in $G$, and for each atom $b_i$ in $b^-$ and each node $u\in U$, where $U\labellededge{i} \{v\}$ is an n-edge in $G$, we have $b_i\sigma\notin u(S)$.
\end{compactitem}
$G(S)\val S \cup \bigcup_{v\in V}v(S)$ contains all facts derived by materialising \seg $G$ on stream $S$.
\end{definition}

If $G(S)$ contains all facts that follow from $P\cup S$, and no other facts, then we say that \seg $G$ is an \stg.
\begin{definition}[Streaming Trigger Graph]\label{def:tstg}
Considering a \tdfpneg program $P$, a stream $S$ and a \seg $G$ for $P$, we say that $G$ is a Streaming Trigger Graph (\stg) for $P\cup S$ if, for each fact $a$, we have $P\cup S\models a$ iff $a\in G(S)$. $G$ is a \stg for $P$ if, for each stream $S$, $G$ is a \stg for $P\cup S$.
\end{definition}

Towards practical \stg-based materialisation, we introduce a form of temporal stratification that allows us to materialise \tdfpneg programs incrementally in time.
\begin{definition}[Incremental Stratification]\label{def:incrstrat}
Consider a \tdfpneg program $P$, and the program $P_T$ derived by removing from $P$ all body conditions with time offset $k>0$.
A temporal stratification $\myvec{B_1}, \myvec{B_2}, \dots$ of $P$ is called an incremental stratification of $P$ if, $\forall t\geq 1$, all tuples in the strata of $\myvec{B_t}$ have time-point $t$, and the predicate sets obtained from $\myvec{B_t}$ form a global stratification of $P_T$.
\end{definition}
The following corollary holds due to our restriction on forward-propagating and temporally stratified programs.
\begin{corollary}\label{cor:everyincr}
Every \tdfpneg program has an incremental stratification.
\end{corollary}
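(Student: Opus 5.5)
The approach is to build the incremental stratification directly from a global stratification of the ``same-time'' program $P_T$, and then check that the result is a temporal stratification of $P$ in the sense of Definition~\ref{def:tempstrat}.

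\textbf{Step 1: $P_T$ is globally stratified.} Let $P$ be a \tdfpneg program and $P_T$ as in Definition~\ref{def:incrstrat}. Every remaining body condition in $P_T$ has offset $0$, so each dependency in $P_T$ links $p(\cdot,t)$ to $q(\cdot,t)$ for the same $t$. Suppose the predicate dependency graph of $P_T$ had a cycle through a negative edge. Instantiating every rule on that cycle with the same time-point $t$ gives a cycle through negation among ground atoms $(p,t)$ of $P'$, the object-free reduction of $P$. No local stratification can order such a cycle, which contradicts the temporal stratification of $P$. Hence $P_T$ has no negative cycle and admits a global stratification $\pred_1,\dots,\pred_n$; this is decidable by the standard dependency-graph argument. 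Predicates absent from $P_T$ heads are placed in $\pred_1$.

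\textbf{Step 2: define the strata.} For each $t\geq 1$ let $\myvec{B_t} = B_t^1,\dots,B_t^n$ with $B_t^i=\{(p,t)\mid p\in\pred_i\}$. Order all strata lexicographically by $(t,i)$, which yields an $\omega$-indexed sequence since $n$ is finite. By construction every tuple in $\myvec{B_t}$ has time-point $t$, and the predicate sets of $\myvec{B_t}$ are exactly the global stratification of $P_T$.

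\textbf{Step 3: verify temporal stratification.} Take a ground instance of a rule of $P'$ with head $(h,t)\in B_t^i$ and a body atom $(b,t-k)$. The case split is on the offset.
\begin{compactitem}
\item If $k>0$, forward propagation (Definition~\ref{def:fptp}) gives $t-k<t$. So $(b,t-k)$ lies in some $\myvec{B_{t'}}$ with $t'<t$, which is strictly earlier in the order. This satisfies both the positive condition and the strict negative condition.
\item If $k=0$, the condition survives in $P_T$, so the global stratification gives $b\in\pred_j$ with $j\leq i$, or $j<i$ when the condition is negated. Hence $(b,t)\in B_t^j$ is correctly placed.
\end{compactitem}
Time-points $t-k<1$ produce no atoms and are ignored.

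\textbf{Main obstacle.} The delicate point is Step 1, namely transferring local stratification of $P'$ into the absence of negative cycles in $P_T$. It must be argued that every same-time dependency edge of $P_T$ is realized at a common ground time-point. This holds because each rule has a single time variable, so grounding $T\mapsto t$ instantiates all offset-$0$ conditions at $t$ simultaneously.
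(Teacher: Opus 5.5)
Your proposal is correct and follows the paper's own route. The paper attributes the corollary to forward propagation plus temporal stratification and builds the incremental stratification exactly as you do: it takes a global stratification $\Sigma_1,\dots,\Sigma_n$ of $P_T$ and sets $B_t^i=\{(p,t)\mid p\in\Sigma_i\}$ for every $t\geq 1$. Your Steps 1 and 3 spell out the checks the paper leaves implicit. One cosmetic point: in Step 1, pick $t$ larger than every offset in $P$, so that, under your Step 3 convention, no ground instance on the cycle involves a non-positive time-point.
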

To construct an incremental stratification $\myvec{B_1}, \myvec{B_2}, \dots$ for program $P$, we compute a global stratification $\pred_1$, \dots, $\pred_n$ for $P_T$, and set $\myvec{B_t}\val B_t^1, \dots, B_t^n$ and $B_t^i\val \{(p, t)| p{\in} \pred_i\}$, where $i\in [n]$, for each $t\geq 1$.
%
%Then, for each $t\geq 1$, we can compute ,. %, while preserving the predicate ordering in $\pred_T$
%
We may then compute the perfect model of $P$ over a stream $S$ by setting $I^0\val S$ and processing $\myvec{B_1}, \myvec{B_2}, \dots$ bottom-up.
For each $\myvec{B_t}$, we substitute $T$ with $t$ in $P$ and evaluate its rules following the predicate stratification in $\myvec{B_t}$, leading to interpretation $I^t$.
%
%The body conditions with time-point $t'<t$ are resolved based only on the temporal facts in $I^{t\minus 1}$. 
%
%The above procedure computes the perfect model of $P$ by applying the CWA incrementally in time. 
%
Crucially, $I^t$ is the perfect model of $P\cup S$ up to time-point $t$, enabling the incremental materialisation of $P$.
%
%We use this stratification schema in our \stg-based materialisation algorithm.  

For TGs, $k$-compatibility ensures that, for each node $v$, its incoming edges stem from nodes defining the predicates in the conditions of $\nodelabelling(v)$ (see Definition \ref{def:kcompatible}).
For STGs, we update $k$-compatibility to ensure that the time labels of these nodes are also consistent with the conditions of $\nodelabelling(v)$.

\begin{definition}[$k$-compatibility]\label{def:kcompatibleseg}
Consider program $P$ with intensional rule $r$ and \seg G.
Tuple $(u_1, \dots, u_n)$ of nodes from $G$ is $k$-compatible with $r$ at time $t$ if:
\begin{compactitem}
\item the predicate in the head of $u_j$ equals the predicate of the $j$-th positive condition of $r$;
\item $\timelabelling(u_j)$ is equal to $t$ minus the offset in the $j$-th positive condition of $r$;
\item each $u_j$ was constructed at an iteration prior to $k$; and 
\item at least one $u_j$ was constructed at iteration $k\minus 1$.
\end{compactitem}
\end{definition}

Unfortunately, $k$-compatibility is inadequate when all tuple nodes were constructed when processing a previous stratum.
Consider, e.g., this traffic light monitoring program:
\begin{align}
    red(L, T)&\leftarrow green(L, T\minus 1). \label{eq:red1} \\
    red(L, T)&\leftarrow green(L, T\minus 2). \label{eq:red2} \\
    green(L, T)&\leftarrow red(L, T\minus 1), red(L, T\minus 2). \label{eq:green}
\end{align}
Given $red(a, 1)$ and $red(a, 2)$, we build an EG $G^4$ up to $t\val 4$.
In the first iteration, we build node for rule \eqref{eq:green} with fact $green(a,3)$.
In the second iteration, we add node for rule \eqref{eq:red1} with fact $red(a,4)$, completing materialisation up to $t\val 4$.
To update $G^4$ beyond $t\val 4$, we need further iterations.
However, since $green(a,3)$ was computed two iterations ago, there are no $k$-compatible nodes with rule \eqref{eq:red2} at $t\val 5$, prohibiting the derivation of $red(a,5)$.
To address this issue, we propose the notion of \emph{consistency}.
\begin{definition}[Consistency]
Consider program $P$ with intensional rule $r$ and \seg G.
Tuple $(u_1, \dots, u_n)$ of nodes from $G$ is consistent with $r$ at time $t$ if:
\begin{compactitem}
\item the predicate in the head of $u_j$ equals the predicate of the $j$-th positive condition of $r$;
\item $\timelabelling(u_j)$ is equal to $t$ minus the offset in the $j$-th positive condition of $r$; and
\item each $u_j$ was constructed at an iteration prior to $k$.
\end{compactitem}
\end{definition}
%
%tuple $(u_1, \dots, u_n)$ of nodes is consistent with $r$ at time $t$ if, $\forall j{\in} [n]$, the predicate in the head of $u_j$ equals the predicate in the $j$-th positive condition of $r$, and $\timelabelling(u_j)$ equals $t$ minus the offset in that condition.
%
%For a rule $r$, the set of consistent node tuples subsumes the set of $k$-compatibility tuples.
%
\begin{algorithm}[t]
\caption{$\tstgmat$}
\label{alg:tstgmat}
\small
\textbf{Input}: A \tdfpneg program $P$ and a stream $S$.\\
\textbf{Output}: The facts in an \stg for $P\cup S$ up to time-point $t$.
\begin{algorithmic}[1] %[1] enables line number
\State $t\defeq 0$, $k \defeq 0$, $S_{\leq 0}\defeq \emptyset$, $G \defeq \emptyset$, $I^0 \defeq \emptyset$ \label{tstgmat:init}
\State $\myvec{B_t} \defeq \incrementallystratify(P)$ \label{tstgmat:stratify}

%\Statex
\While{True} 
\State $t{\defeq} t\plus 1$, $S_{\leq t}{\defeq} S_{\leq t\minus 1} {\cup} read(S, t)$, $P^t{\defeq} ground(P, t)$ \label{tstgmat:init_t} 

\ForAll{$B_t^j\in \myvec{B_t}$} \label{tstgmat:forpredstratum}
\Repeat \label{tstgmat:repeat}
\State $k \defeq k\plus 1$, $I^k\defeq I^{k\minus 1}$

\ForAll{rules $r$ in $P^t$: $pred(head(r))\in B_t^j$} \label{tstgmat:forrule}
\If{$r$ is extensional} \label{tstgmat:ifextensional}
\State $G.\myadd((v, \emptyset, \emptyset, \nodelabelling(v)\val r, \timelabelling(v)\val t, \emptyset))$ \label{tstgmat:addextensional}
\State \textbf{continue} \label{tstgmat:continue}
\EndIf

% set NE 
\State $\nedges\defeq \emptyset$ \label{tstgmat:initne}
\ForAll{$b^-_i\in r$} \label{tstgmat:fornegb}
\State $U\defeq \{u | pred(head(\nodelabelling(u)))\val pred(b^-_i) \wedge $ \label{tstgmat:computeu}
\Statex \qquad\qquad\qquad\quad\;\;$\timelabelling(u)\val t\minus offset(b^-_i)\}$
\State $\nedges.\myadd((U\labellededge{i} \{v\}))$ \label{tstgmat:addne}
\EndFor

% non-recursive case (inter-stratum dependencies only)
\If{$\forall b^+_i\in r$: $pred(b^+_i)\not\in B_t^j \vee offset(b^+_i)>0$} \label{tstgmat:prevstratcond}
\ForAll{$(u_1{,}\dots{,}u_n)$ consistent with $r$} \label{tstgmat:forkapplicable}
\State $G.\myadd((v, \bigcup_{i\in [n]}(u_i,v), \nedges, \nodelabelling(v)\val r,$ \label{tstgmat:addapplicable}
\Statex \qquad\qquad\qquad\quad\hspace{1.4pt}\;\;$ \timelabelling(v)\val t, \bigcup_{i\in [n]}\edgelabelling((u_i, v))\val i))$
\EndFor
% recursive case (at least one intra stratum dependency
\Else~\textbf{for all} $(u_1, \dots, u_n)$ $k$-compatible with $r$ \textbf{do} \label{tstgmat:forkcompatible}
\State $G.\myadd((v, \bigcup_{i\in [n]}(u_i,v), \nedges, \nodelabelling(v)\val r, $ \label{tstgmat:addcompatible}
\Statex \qquad\qquad\qquad\quad\;$\timelabelling(v)\val t, \bigcup_{i\in [n]}\edgelabelling((u_i, v))\val i))$
%\EndFor
\EndIf
\EndFor
\State $G \defeq \textsc{minDatalog}(G)$ \label{tstgmat:minimize}
\ForAll{nodes $v$ of iteration $k$}~$I^k{.}\myadd(v(S_{\leq t}, I^{k\minus 1}))$ \label{tstgmat:addmat}
\EndFor
\Until{$I^k\val I^{k\minus 1}$} \label{tstgmat:until}
\EndFor
\State $output(I^k)$, $\forget(G, S_{\leq t})$ \label{tstgmat:out}
\EndWhile

\end{algorithmic}
\end{algorithm}

\begin{figure*}
\includegraphics[width=.99\linewidth]{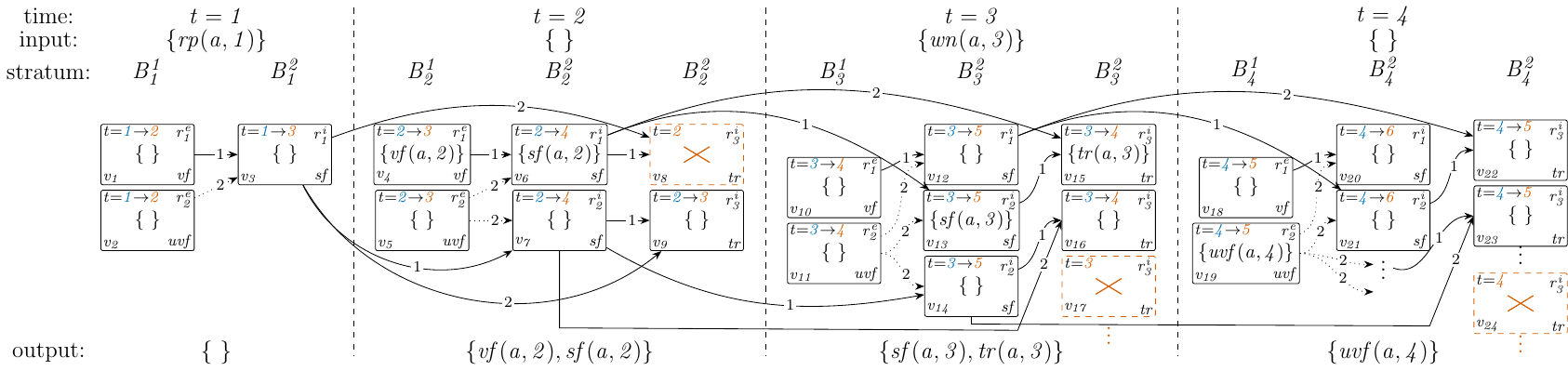}
\caption{STG for Example \ref{ex:tstgmat} up to $t\val 4$. For each node $v$, blue (resp.~vermillion) coloured labels mark $\timelabelling(v)$ (the ``forget time'' of $v$).}
\label{fig:stg}
\end{figure*}

Algorithm \ref{alg:tstgmat} outlines the steps of \stg-based materialisation for program $P$ and stream $S$.
First, we compute an incremental stratification for $P$, with $\myvec{B_t}$ denoting the strata with time-point $t$ (line \ref{tstgmat:stratify}).
Next, for each time-point $t$, we (i) read the facts of $S$ occurring at $t$ (line \ref{tstgmat:init_t}); (ii) ground the time variable of $P$ to $t$, leading to program $P^t$ (line \ref{tstgmat:init_t}); (iii) iterate over the strata $B_t^j$ in $\myvec{B_t}$ and update the \stg $G$ constructed thus far with nodes computing the predicates in $B_t^j$ over $P^t$ (lines \ref{tstgmat:forpredstratum}--\ref{tstgmat:until}); (iv) output the facts computed at $t$ based on $G$ (line \ref{tstgmat:out}); and (v) remove from $G$ and $S_{\leq t}$---the stream read thus far---all nodes and extensional facts that cannot be used for derivations after $t$ according to $P$ (line \ref{tstgmat:out}).

For step (iii), we materialise the predicates in $B_t^j$ in a process similar to TG-materialisation (compare lines \ref{tstgmat:repeat}--\ref{tstgmat:until} and Algorithm \ref{alg:tgmat}).
The first key difference concerns negation: for each negative condition $b_i^-$ in the rule of a node, we add one incoming n-edge whose source is the set of nodes with the same predicate as $b_i^-$ and a time label corresponding to the time-point in $b_i^-$ (lines \ref{tstgmat:initne}--\ref{tstgmat:addne}).
The second difference is that we rely on node consistency---instead of $k$-compatibility---for rules where all body conditions refer to a previous stratum, i.e., their predicate is not in $B_t^j$ or their time-point is earlier than the current one (lines \ref{tstgmat:prevstratcond}--\ref{tstgmat:addapplicable}).

\noindent\textbf{Optimisations.}~TGs reduce redundant computations via two optimisations~\cite{DBLP:journals/pvldb/TsamouraCMU21}: 
(i) after each round $k$, nodes constructed at $k$ whose query is subsumed by another node, based on rewritings recursively replacing intensional predicates with extensional ones following TG dependencies, are removed; and 
(ii) joins during node materialisation are restricted to tuples that are not in the previous interpretation.
Both optimisations extend to \stgs.
For~(i), n-edges stem from nodes in previous strata, so their facts can be treated as extensional in query rewritings (line \ref{tstgmat:minimize}).
Optimisation~(ii) applies unchanged (line \ref{tstgmat:addmat}).
We may further optimise \stgs by \emph{forgetting} nodes whose facts can no longer contribute to future derivations based on their time-stamps.

%As described in step (v), we can ``forget'' \stg nodes that cannot contribute to further derivations based on their time-stamp.
%
\begin{proposition}[Forgetting]
Consider a \tdfpneg program $P$ and an \stg $G$ for $P$ with node $v$ where $\nodelabelling(v)$ defines predicate $p$.
If $k_{p}$ is the maximum temporal offset of any body condition mentioning $p$ across all rules in $P$, then $v$ can be removed from $G$ at evaluation time $t\plus k_{p}\plus 1$.
\end{proposition}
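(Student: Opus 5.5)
Throughout, $t$ denotes the time label of $v$, i.e.\ $\timelabelling(v)\val t$. I would prove the proposition by showing that, from evaluation time $t\plus k_p\plus 1$ on, no node that Algorithm \ref{alg:tstgmat} may still construct can have an incoming p-edge or n-edge from $v$. Removing $v$ then changes neither the facts of later nodes nor the $output$ at later time-points. Earlier outputs were already emitted, and facts of nodes constructed before $v$'s removal are fixed once computed.

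\textbf{Step 1 (edges only go to nodes defining later or equal times).} Let $w$ be a node with $\timelabelling(w)\val t_w$ that has a p-edge $v\labellededge{j} w$ or satisfies $v\in U$ for an n-edge $U\labellededge{j}\{w\}$. By Definition \ref{def:tneg}, the $j$-th positive (resp.\ negative) condition of $\nodelabelling(w)$ has predicate $p$, and $t\val t_w\minus k$ for its offset $k$. By definition of $k_p$ we have $k\leq k_p$. By Definition \ref{def:fptp} we have $k\geq 0$. Hence $t\leq t_w\leq t\plus k_p$.

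\textbf{Step 2 (which nodes are built when).} I would check that every node constructed while processing time-point $t'$ has time label $t'$. This holds for extensional nodes (line \ref{tstgmat:addextensional}) and for both intensional cases (lines \ref{tstgmat:addapplicable} and \ref{tstgmat:addcompatible}). In addition, the sets $U$ in n-edges and the tuples satisfying consistency or $k$-compatibility are chosen, via the time-label condition, among nodes with label $t'$ minus an offset appearing in $P$. Combining with Step 1, any node that uses $v$ must be built at some time-point $t'\leq t\plus k_p$. Thus, once the loop reaches $t\plus k_p\plus 1$, no future construction can reference $v$.

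\textbf{Step 3 (facts unaffected).} Nodes are materialised once, in the iteration that builds them (line \ref{tstgmat:addmat}), and $\forget$ acts after output at line \ref{tstgmat:out}. Therefore deleting $v$ does not alter $w(S)$ for any existing node $w$. This is the point I expect to be the main obstacle. Definition \ref{def:tnegexec} defines $v(S)$ declaratively in terms of the graph, so deleting a source node from an n-edge set $U$ could in principle make a negated fact appear absent. I would handle this by arguing operationally: correctness of the incremental stratification guarantees that $I^{t'}$ is the perfect model up to $t'$ (the discussion after Corollary \ref{cor:everyincr}). The facts of $v$ remain in the accumulated interpretation $I^k$, so derivations and outputs are unchanged. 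Hence $G$ minus $v$ yields the same outputs at all time-points $\geq t\plus k_p\plus 1$, which is what "can be removed" asserts.
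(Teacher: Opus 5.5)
Your argument is correct and matches the paper's reasoning. The paper gives no formal proof here, only the remark that nodes are forgotten once their facts "can no longer contribute to future derivations based on their time-stamps" and the worked example. Your Steps~1--2 make that remark precise: forward propagation puts the relevant offsets in $[0,k_p]$, and every node built while processing $t'$ is labelled $t'$ and takes its p-edge sources, n-edge sets $U$ and consistent or $k$-compatible tuples only from nodes labelled $t'$ minus an offset of $P$, so nothing built at $t'\geq t+k_p+1$ can reference $v$.

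Step~3 is a useful addition the paper omits: under the declarative definition of facts in a \seg the pruned graph is no longer an \stg, so the claim must be read operationally. There your appeal to the perfect model is not needed. What matters is only that node facts are stored when computed and that later nodes read just their in-neighbours' facts.
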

Since $k_{p}$ can be derived offline from $P$, forgetting adds no overhead to online materialisation.

\begin{example}[$\tstgmat$]\label{ex:tstgmat}
Consider the following program $P$, where % where $r$, $w$, $v$, $uv$, $s$ and $tr$ stand for $repair$, $warning$, $verified$, $unverified$, $safe$ and $trusted$.
$rp$, $wn$, $vf$, $uvf$, $sf$ and $tr$ abbreviate $repair$, $warning$, $verified$, $unverified$, $safe$ and $trusted$.
\begin{align}
& \label{eq:v}
\begin{mysplit}
vf(X, T)\leftarrow rp(X, T\minus 1). 
\end{mysplit}
\tag{$r^e_1$}\\[-2pt]
& \label{eq:uv} 
\begin{mysplit}
uvf(X, T)\leftarrow wn(X, T\minus 1). 
\end{mysplit}
\tag{$r^e_2$}\\[-2pt]
& \label{eq:snow}
\begin{mysplit}
    sf(X, T)\leftarrow vf(X, T), \nbf\ uvf(X, T).
\end{mysplit}
\tag{$r^i_1$}\\[-2pt]
& \label{eq:sinertia}
\begin{mysplit}
    sf(X, T)\leftarrow sf(X, T\minus 1), \nbf\ uvf(X, T).
\end{mysplit}
\tag{$r^i_2$}\\[-2pt]
& \label{eq:tr}
\begin{mysplit}
    tr(X, T)\leftarrow sf(X, T), sf(X, T\minus 1).
\end{mysplit}
\tag{$r^i_3$}
\end{align}
$r^e_1$ and $r^e_2$ are extensional, while $r^i_1$, $r^i_2$ and $r^i_3$ are intensional.
Strata $B_t^1\val \{(rp,t),(wn,t),(vf,t),(uvf,t)\}$ and $B_t^2\val \{(sf,t),(tr,t)\}$ incrementally stratify $P$.

Figure \ref{fig:stg} presents the STG constructed by Algorithm \ref{alg:tstgmat} over a stream with $rp(a,1)$ and $wn(a,3)$ up to $t\val 4$.
Starting from $t\val 1$ and $B_1^1$, we first construct nodes $v_1$ and $v_2$ for extensional rules $r^e_1$ and $r^e_2$, defining $vf$ and $uvf$.
These rules cannot fire at $t\val 1$, since their conditions refer to the previous time-point, and thus $v_1$ and $v_2$ are empty.
Moving to the next stratum $B_1^2$, $v_1$ is consistent with intensional rule $r^i_1$---defining $sf$---at time $t\val 1$, leading to the creation of node $v_3$, with incoming p-edge $v_1\labellededge{1} v_3$ and n-edge $\{v_2\}\labellededge{2} v_3$.
No further nodes can be constructed for rules defining predicates in $B_1^2$,  completing materialisation for $t\val 1$.
All conditions in $P$ for predicates $vf$ and $uvf$ have temporal offset $0$, while $sf$ appears in conditions with offset $1$.
Thus, $v_1$ and $v_2$ are forgotten at the next time-step $t\val 2$, while $v_3$ is forgotten when $t\val 3$.

For $t\val 2$ and $B_2^1$, $r^e_1$ fires because of fact $rp(a,1)$, populating node $v_4$ with $vf(a,2)$.
Since  $wn(a,1)$ is not in the input, $uvf(a,2)$ cannot be proven, and thus an empty node $v_5$ is constructed for $r^e_2$.
Moving to $B_2^2$, there are two distinct proofs for $sf(a,2)$: the former requires $vf(a,2)$ via $r^i_1$, while the latter requires $sf(a,1)$ via $r^i_2$.
Node $v_6$ captures the former proof via p-edge $v_4\labellededge{1} v_6$, and is populated with $sf(a,2)$, while $v_7$ captures the latter proof.
Next, we construct nodes $v_8$ and $v_9$, denoting the possible ways of proving $tr(a,2)$.
These proofs, however, are not distinct---given a proof of $sf(a,1)$ via $r^i_1$, reducing $sf(a,2)$ via $r^i_1$ yields a subquery of the query produced through its reduction with $r^i_2$---and thus $v_8$ is removed before materialisation.

At $t\val 3$, $sf(a,3)$ is derived via $r^i_2$, and $tr(a,3)$ then follows from $r^i_3$.
At $t\val 4$, $wn(a,3)$ leads to $uvf(a,4)$ in node $v_{19}$.
The n-edge from $v_{19}$ then blocks the derivation of $sf(a,4)$ via both $r^i_1$ and $r^i_2$, capturing their negative condition. %, demonstrating how negation prevents fact propagation in the STG.
The full trace up to $t\val 4$ is shown in Figure \ref{fig:stg}.
%
%The continuation of $\tstgmat$ for increasing $t$ is similar; Figure \ref{fig:stg} demonstrated it up to $t\val 4$.
%
\end{example}

\begin{theorem}[$\tstgmat$ Correctness]
Algorithm \ref{alg:tstgmat} builds an \stg $G$ for \tdfpneg program $P$ and stream $S$.
\end{theorem}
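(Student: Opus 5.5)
We argue by induction along the incremental stratification computed in line \ref{tstgmat:stratify}; by Corollary \ref{cor:everyincr} it exists. Order the pairs $(t,j)$ lexicographically, with $t\geq 1$ and $B_t^j\in\myvec{B_t}$. For each pair, write $I_{t,j}$ for the restriction of the perfect model $\perfect$ of $P\cup S$ to facts whose predicate and time-point lie in $B_{t'}^{j'}$ with $(t',j')\leq(t,j)$. The invariant is that, after the repeat loop for $B_t^j$ terminates (line \ref{tstgmat:until}), the facts in the nodes of $G$ whose time label is at most $t$ and whose head predicates belong to strata up to $(t,j)$ are exactly $I_{t,j}\setminus S$. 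Since every fact of $\perfect$ falls in some stratum, this invariant for all $(t,j)$ yields $P\cup S\models a$ iff $a\in G(S)$, which is Definition \ref{def:tstg}. The invariant is phrased relative to $t$ and to the relevant nodes, so forgetting does not interfere with it. The justification is the forgetting proposition: a node with predicate $p$ is only consulted through conditions with offset at most $k_p$, so once it is removed it is never again the source of a p-edge or a member of an n-edge.

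\textbf{Reduction to stratified evaluation.} First I check that $\perfect$ restricted to stratum $(t,j)$ is the least fixpoint of the ground rules $P^t$ with heads in $B_t^j$, where every body atom from an earlier stratum is evaluated against $I_{t,j-1}$ (or $I_{t-1,\cdot}$), and negation is read as the CWA on those earlier strata. This holds because negative conditions always point to strictly lower strata: for offset $0$ this follows from the global stratification of $P_T$, and for offset $>0$ from forward propagation. Inside the loop, an n-edge $U\labellededge{i}\{v\}$ gathers all nodes with the right predicate and time label. By the induction hypothesis those nodes already contain exactly the perfect-model facts for that atom, so the filter in Definition \ref{def:tnegexec} coincides with the CWA. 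Extensional rules read $S_{\leq t}$ directly, and $S_{\leq t}$ holds every stream fact with time $\leq t$, which is all a forward-propagating rule ever needs.

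\textbf{Positive part: soundness and completeness within a stratum.} For soundness, every node fact is obtained by firing a ground rule on facts that, by induction or by the inner iteration, belong to $\perfect$, and the n-edges enforce the negative conditions; hence the fact is in $\perfect$. For completeness I split into the two branches of line \ref{tstgmat:prevstratcond}. If all positive conditions refer to earlier strata, the relevant source nodes were finished earlier, and line \ref{tstgmat:forkapplicable} enumerates \emph{every} consistent tuple; this handles the case shown by rules \eqref{eq:red1}--\eqref{eq:green}, in which $k$-compatibility fails. If some condition lies in $B_t^j$ at offset $0$, I reuse the TG completeness argument of \cite{DBLP:journals/pvldb/TsamouraCMU21}. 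Each derivation in the stratum has a rank (proof depth), and I show by induction on rank that a node of a suitable iteration contains it. The $k$-compatibility requirement that some source be from iteration $k-1$ is harmless, since the source of maximal rank is new, and every other combination was already considered at an earlier iteration. The restriction of joins to facts not in $I^{k-1}$ is semi-naive and is therefore complete.

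\textbf{Main obstacle.} The difficult step is showing that \textsc{minDatalog} (line \ref{tstgmat:minimize}) never deletes a node that is needed, once negation is present. My plan is to treat every atom reached by an n-edge, together with every earlier-stratum atom, as extensional. By the induction hypothesis their extensions are fixed and final, so the rewriting of a node query is a union of conjunctive queries over a fixed database augmented with negative filters over fixed relations. Query containment then transfers from the TG proof. I also need to check that the filters attached to a subsuming node are the same rule-level negations, so that subsumption is preserved. Finally, termination of each repeat loop holds because at a fixed $t$ the ground program has finitely many relevant facts.
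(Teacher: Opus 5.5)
Your proposal is correct and follows essentially the same route as the paper's sketch. Both argue by induction over the incremental stratification. Negation is handled by the induction hypothesis together with n-edges that collect all nodes for the negated predicate and time label. The consistency branch covers rules whose positive conditions all lie in earlier strata, and the recursive case reduces to TG correctness. Your explicit treatment of \textsc{minDatalog} (treating n-edge and earlier-stratum sources as extensional, as in the paper's optimisation remark), of forgetting and of termination fills in points the paper's sketch leaves implicit. One small correction: state your invariant over all nodes ever constructed, or over the accumulated output $I^k$, rather than over the graph after forgetting, where forgotten facts are no longer present.
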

\begin{proof}[Proof Sketch]
We show by induction on the strata in $\myvec{B_t}$ that $P\cup S\models a(\myvec{x}, t)$ iff $a(\myvec{x}, t)\in G(S)$.
Correctness for $B_1^1$ follows from the correctness of $\tgmat$ (Algorithm \ref{alg:tgmat}), because the predicates in $B_1^1$ have no negative dependencies and their temporal dependencies are vacuous at $t\val 1$.
Assuming correctness up to $B_1^{i\minus 1}$, we prove it for $B_1^i$:
every negated condition ``$\nbf\ b$'' in a rule $r$ defining a predicate $a$ where $(a,1)\in B^i_1$ refers to a lower stratum, so $P\cup S\models b$ iff $b\in G(S)$ by the inductive assumption, while lines~\ref{tstgmat:fornegb}--\ref{tstgmat:addne} add n-edges from all nodes that may contain $b$, ensuring correct evaluation.
Assuming correctness up to $B_{t-1}^n$, correctness for $B_t^1$ follows from the consistency criterion (lines~\ref{tstgmat:prevstratcond}--\ref{tstgmat:addapplicable}), which correctly materialises rules whose conditions all refer to previous strata.
The same arguments support the step from $B^{i\minus 1}_t$ to $B^{i}_t$, completing the proof.
\end{proof}

\section{Summary, Related and Further Work}\label{sec:related}

We proposed mappings from two prominent event specification languages (ESLs) for composite event recognition (CER), which are fragments of LARS and EC, to \tdfpneg, i.e., a temporal Datalog with stratified negation and no future dependencies, and proved their correctness.
We also introduced Streaming Trigger Graphs as an optimised evaluation mechanism for \tdfpneg.
%
%Our contributions aim to unify CER, as their scope covers ESLs that are stratified and forward-propagating---essential properties for CER.
%
%In the future, we aim to implement our approach and compare it with state-of-the-art CER frameworks.

Numerous ESLs have been proposed for CER~\cite{DBLP:conf/debs/CugolaM10,DBLP:conf/frocos/Baumgartner21,DBLP:journals/pvldb/AlevizosAP24}.
DatalogMTL, e.g., combines Datalog with quantitative temporal modalities in a streaming setting~\cite{DBLP:journals/tplp/WangGWH25}.
Contrary to LARS and EC, DatalogMTL is defined over a rational time model and uses intervals to quantify temporal offsets. 
%
%Extending our method for this expressivity is an interesting future direction.
%
DatalogMTL variants with forward-propagating rules~\cite{DBLP:conf/aaai/WalegaKG19}, negation~\cite{DBLP:conf/aaai/CucalaWGK21} and an integer time model~\cite{DBLP:conf/kr/WalegaGKK20} have also been studied.
We conjecture that such a restricted variant can be mapped to \tdfpneg in a similar way to our plain LARS translation, due to similarities between their temporal operators.
Another prominent CER system is CORE~\cite{DBLP:conf/icdt/GrezRUV20,DBLP:journals/pvldb/BucchiGQRV22}.
Contrary to \tdfpneg, CORE is restricted to unary predicates, and thus cannot express relational composite events.

A LARS-to-ASP translation has been proposed for an incremental evaluation of LARS queries~\cite{DBLP:journals/tplp/BeckEB17}, but introduces multiple time variables per rule, producing programs outside \tdfpneg.
Furthermore, the stratification in~\cite{DBLP:conf/ijcai/BeckDE15} prohibits recursion via window operators in LARS, facilitating efficient reasoning.
Our approach does not make this restriction.

Our work is motivated by earlier attempts to avoid ``blocking queries'', requiring arbitrary wait times on new streaming data~\cite{DBLP:conf/pods/BabcockBDMW02}.
\tdfpneg\ prohibits such queries by construction. 
Streamlog is also a temporal Datalog without blocking queries. Unlike our work, it allows equality constraints on time terms and reasons only over time-points appearing in the input~\cite{DBLP:conf/cikm/DasGZ18}.
%
%Streamlog is also a temporal Datalog without blocking queries~\cite{zaniolo2012,DBLP:conf/cikm/DasGZ18}.
%
%It differs from our work as it allows equality constraints on time terms and restricts reasoning to time-points appearing in input facts.
%
%The associated 
%ASTRO ~\cite{DBLP:conf/cikm/DasGZ18} is a stream reasoning framework that implements Streamlog.
%
%Unfortunately, ASTRO was not available to us for experimentation, and thus was not included in our comparisons.

%Real-world event streams and patterns commonly exhibit uncertainty, necessitating probabilistic CER solutions~\cite{DBLP:journals/csur/AlevizosSAP17,Tiger2020,DAsaro2020,tsilionis2025tensor}.
%
%Leveraging a recent trigger graph extension for probabilistic reasoning~\cite{DBLP:journals/pacmmod/TsamouraLU23}, we aim to integrate such reasoning into our approach. 

Several large-scale Datalog engines exist~\cite{DBLP:conf/sigmod/ShkapskyYICCZ16,ryzhyk2019differential,DBLP:conf/kr/Ivliev0MSK24}, and incremental evaluation has been studied for streaming data~\cite{DBLP:journals/sigmod/BudiuCMRT24}, but these approaches do not model temporal modalities.
%
%There are various large-scale Datalog engines~\cite{DBLP:conf/sigmod/ShkapskyYICCZ16,ryzhyk2019differential,DBLP:conf/kr/Ivliev0MSK24}.
%
%Kaminski et al.~(\citeyear{DBLP:journals/jacm/KaminskiKGMH22}) studied the computational properties of Datalog with stratified negation and arithmetic functions over integers.
%
%Incremental Datalog evaluation has been analysed for continuous query processing over streaming data~\cite{DBLP:journals/sigmod/BudiuCMRT24}.
%
%Contrary to our work, these approaches do not model temporal modalities.
%
Temporal Vadalog supports temporal knowledge graph reasoning over existential rules~\cite{DBLP:journals/tplp/BellomariniBNS25}.
Such rules have also been studied in the context of LARS~\cite{DBLP:conf/kr/UrbaniKE22}, DatalogMTL~\cite{DBLP:conf/ijcai/LanzingerNSW23}, and trigger graphs~\cite{DBLP:journals/pvldb/TsamouraCMU21}, thus constituting a natural future extension for \stgs.
Our work assumes ordered input, as is common in stream reasoning~\cite{zaniolo2012}.
Handling out-of-order streams~\cite{DBLP:journals/jair/TsilionisAP22,DBLP:journals/amai/CruzFilipeGN25} via temporal delays in reasoning~\cite{DBLP:journals/ai/RoncaKGH22} is left for future work.
Finally, we aim to leverage a probabilistic extension of trigger graphs~\cite{DBLP:journals/pacmmod/TsamouraLU23} to support CER under uncertainty~\cite{DBLP:journals/csur/AlevizosSAP17,Tiger2020,tsilionis2025tensor}.

%Our work assumes that input events are ordered based on their time-stamps, which is a common practice in stream reasoning~\cite{zaniolo2012}, while some works tackle out-of-order streams~\cite{DBLP:journals/jair/TsilionisAP22,DBLP:journals/amai/CruzFilipeGN25}.
%
%Such streams would introduce a temporal delay to \tdfpneg materialisation~\cite{DBLP:journals/ai/RoncaKGH22}; formalising such delays would be interesting for future work.

%\input{sections/summary}

\section*{AI Declaration}

No generative AI tools were used in the research, development, or writing of this paper.

\section*{Acknowledgements}

This work was supported by the Wallenberg AI, Autonomous Systems and Software Program (WASP) funded by the Knut and Alice Wallenberg Foundation.

%% The file kr.bst is a bibliography style file for BibTeX 0.99c
\bibliographystyle{kr}
\bibliography{refs}

%\appendix

%\section*{Appendix}

%\input{appendix}
%\input{sections/prob/motivation}

%\input{sections/prob/probabilistic-databases}

%\input{sections/prob/tp_compilation}

%\input{sections/prob/trigger_graphs}

\end{document}